\newtheorem{example}{Example}[section]
\newcommand{\rank}{{\rm rank\,}}
\def\fro{\mathrm{F}}
\def\rank{\mathsf{rank}}
\newcommand{\rev}[1]{\textcolor{black}{\noindent  #1}}
\title{Coseparable Nonnegative Tensor Factorization with t-CUR Decomposition \thanks{Received by the editors XXXX, 2023; accepted for publication (in revised form) XXXX XX, 2023; published electronically XXXX XX, 2023. \funding{J. Chen and Y. Wei are supported by the National Natural Science Foundation of China under grant12271108 and the Ministry of Science and Technology of China under grant G2023132005L and the Science and Technology Commission of Shanghai Municipality under grant 23JC1400501.}}}
\author{Juefei Chen\thanks{School of Mathematical Sciences, Fudan University.
Shanghai, 200433, P R. of China}
\and Longxiu Huang\thanks{Corresponding author. Department of Computational Mathematics, Science and Engineering and Department of Mathematics, Michigan State University, East Lansing, MI 48823, USA. (email:  \href{mailto:huangl3@msu.edu}{huangl3@msu.edu})}
\and Yimin Wei\thanks{School of Mathematical Sciences and Key Laboratory of Mathematics for Nonlinear Sciences, Fudan University.
Shanghai, 200433, P R. of China}
}
\begin{document}


\maketitle


\begin{abstract}
Nonnegative Matrix Factorization (NMF) is an important unsupervised learning method to extract meaningful features from data. To address the NMF problem within a polynomial time framework, researchers have introduced a separability assumption, which has recently evolved into the concept of coseparability. This advancement offers a more efficient core representation for the original data. However, in the real world, data is more naturally represented as a multi-dimensional array, such as images or videos. The NMF's application to high-dimensional data involves vectorization, which risks losing essential multi-dimensional correlations. To retain these inherent correlations, we turn to tensors (multi-dimensional arrays) and leverage the tensor t-product. This approach extends the coseparable NMF to the tensor setting, creating what we term coseparable Nonnegative Tensor Factorization (NTF). In this work, we provide an alternating index selection method to select the coseparable core. Furthermore, we validate the t-CUR sampling theory and integrate it with the tensor Discrete Empirical Interpolation Method (t-DEIM) to introduce an alternative, randomized index selection process. These methods have been tested on both synthetic and facial analysis datasets.   The results demonstrate the efficiency of coseparable NTF when compared to coseparable NMF.
\end{abstract}

\begin{keywords}
Nonnegative matrix factorization, coseparable, nonnegative tensor factorization, t-product, CUR decomposition
\end{keywords}
\begin{AMS}
\end{AMS}


\section{Introduction}
In data science, unsupervised learning plays an important role in dealing with unlabeled data, as it can uncover unseen patterns, features, and structures within the data. Nonnegative Matrix Factorization (NMF) is a significant unsupervised learning method to extract meaningful features from data \cite{buciu2008non}. Given a nonnegative matrix $A\in\mathbb{R}_{\geq 0}^{m\times n}$ and  a target rank $r<\min\{m,n\}$, NMF approximates $A$ by the product of two non-negative low-rank matrices: the \textit{dictionary matrix} $W \in \mathbb{R}_+^{m \times r}$ and the \textit{coding matrix} $H \in \mathbb{R}_+^{r \times n}$ i.e., $A\approx WH$. 
An important application of NMF is topic modeling, which can extract and classify topics from the given word-document data \cite{CBCL2,kuang2015nonnegative,xu2003document}. It can be used for tasks like text mining, sentiment analysis, and news clustering. To solve the NMF problem in polynomial time, a separability assumption is proposed in \cite{separability}, i.e., $W$ is composed of some columns of $A$. Then some researchers proposed many algorithms to solve the separable NMF problem \cite{separableNMF1, separableNMF2, separableNMF3}. Recently, Pan and Ng generalized the separability assumption into coseparability in \cite{pan2021co}, which assumes that $W$ is composed of some rows and columns of $A$. In other words, $W$ is a submatrix of $A$. It provides a more compact core matrix to represent the original data matrix $A$.

For high-dimensional data like images or videos, using NMF for clustering or feature extraction necessitates vectorization. However, this vectorization process can potentially disrupt inherent correlations in higher dimensions. For instance, converting an image into a vector might result in the loss of relational context between adjacent pixels. So we aim to find a similar method that can process high-dimensional data while preserving their structure. In recent years, tensors have been widely studied as a structure for high-dimensional data, and different techniques have been studied. The CANDECOMP/PARAFAC (CP) and Tucker decompositions \cite{tucker1, tucker2, tucker3} can be considered higher-order generalizations of the matrix singular value decomposition (SVD). Oseledets developed tensor train decomposition \cite{train} and Zhao et al. introduced tensor ring decomposition \cite{ring}. Another type of tensor factorization which is useful in applications is the tensor t-product, founded by Kilmer and Martin in \cite{Tproduct}. Based on the t-product, the matrix factorizations can be extended to tensors, such as t-SVD \cite{Tproduct}, t-QR/t-LU decomposition \cite{TQR}, and t-Schur decomposition \cite{TSchur1, TSchur2}. Han et. al. recently proposed an adaptive data augmentation framework based on the t-product in \cite{TADAF}. The matrix CUR decomposition, which  utilizes self-expression to reduce the size of the initial matrix, is widely investigated in \cite{CUR1, CUR2, TCURproof}, and it was extended to t-CUR decompositions in \cite{TCUR}. 

In the tensor t-product framework, NMF has also been extended to a high-dimensional setting called nonnegative tensor factorization (NTF)  \cite{NTF1}. Further studies in NTF can be found in \cite{NTF2}.  The recent work of coseparable NMF by Pan and Ng is a generalization of separability, and it has a strong relation to CUR decomposition, meaning a CUR-sampling-like method can be used to randomly select the coseparable core. Consequently, in this paper, we choose the coseparable NMF for extension and establish the coseparable NTF theory with the tensor t-product. Unlike most analyses of the tensor t-product being transformed into the Fourier domain, in the case of NTF analysis, the nonnegativity and other NMF properties may not be preserved in the Fourier domain. So we use the original definition of t-product to derive our coseparable NTF properties and algorithms in this paper.

 Our coseparable NTF exhibits  interpretability akin to coseparable NMF but preserves information in higher dimensions. For instance, in clustering tasks on facial datasets, the coseparable NMF identifies the ``significant'' images for each subject along with their key pixels. However, the coseparable NTF core extracts key pixel vectors, such as those representing the eyes, nose, and mouth, retaining high-dimensional features as a whole.

The contribution of this paper can be summarized as follows: 
\begin{enumerate}
    \item We extend the coseparable NMF to tensors and propose coseparable NTF. Some of its properties have also been proven, including its relationship with t-CUR decomposition. \rev{This coseparable NTF offers a framework for handling higher-dimensional data, one can easily further generalize it by setting the matrix blocks as tensors and employing a similar t-product methodology.} To solve our coseparable NTF problem,  an alternating index selection algorithm is proposed to choose the coseparable core. 
    \item Inspired by matrix CUR sampling theory  in \cite{CURsample}, we present the t-CUR sampling theory, which shows that randomly sampling indices according to different probability distributions can achieve the t-CUR decomposition with high probability.  Combining it with the tensor Discrete Empirical Interpolation Method (t-DEIM) \cite{TDEIM}, an alternative t-CUR-DEIM method is proposed to select coseparable cores. 
    \item We test two methods on synthetic coseparable tensor data sets. We also test them on several real facial data sets and compare them with some matrix index selection methods.
\end{enumerate}

The structure of this paper is as follows: \Cref{sec:pre} introduces the concept of tensor t-product. Our theoretical contributions regarding Coseparable Non-negative Tensor Factorization (CoS-NTF) are detailed in \Cref{sec:CoNTF}. Additionally, \Cref{sec:index-selection} presents the index selection algorithm for identifying the subtensor of the original tensor. Numerical validations of our theoretical results are provided in \Cref{sec:simulation}. The paper concludes with \Cref{sec:conclusion}.

\section{Preliminaries} \label{sec:pre}

\subsection{Notation}
In this paper, we adopt the following notation for clarity and consistency. Matrices are denoted by uppercase italic letters (e.g., \(A\)), while third-order tensors are represented by uppercase cursive italics (e.g., \(\mathcal{A}\)). The space of nonnegative matrices and third-order tensors is symbolized as \(\mathbb{R}^{m\times n\times p}_+\) (with \(\mathbb{R}^{m\times n}_+=\mathbb{R}^{m\times n\times 1}_+\) for matrices). We use \(\{a:a+n\}\) to represent the integer set \(\{a, a+1, \cdots, a+n\}\). \(\mathcal{A}_{ijk}\) denotes the \((i,j,k)\)-th entry of tensor \(\mathcal{A}\) and \(\mathcal{A}_{\mathbf{IJK}}\) denotes the subtensor of \(\mathcal{A}\) whose entries $\mathcal{A}_{ijk}$ satisfy $i\in\mathbf{I}$, $j\in\mathbf{J}$, $k\in\mathbf{K}$ for sets $\mathbf{I,J,K}$. Specifically, \(\mathcal{A}_{:jk}\) refers to \(\mathcal{A}_{1:m,j,k}\). The cardinality of a set \(\mathbf{s}\) is expressed as \(|\mathbf{s}|\).

For matrix operations, \(A^{\top}\) and \(A^{\ast}\) denote the transpose and conjugate transpose of \(A\), respectively. The inverse and Moore-Penrose pseudoinverse of \(A\) are represented by \(A^{-1}\) and \(A^{\dag}\). The Kronecker product of matrices \(A\) and \(B\) is indicated as \(A\otimes B\). Norms are specified as \(\|\cdot\|_2\) for the matrix spectral norm and \(\|\cdot\|_{\fro}\) for the Frobenius norm. \(I_n\) denotes the \(n \times n\) identity matrix, while \(F_n\) signifies the \(n \times n\) discrete Fourier transform (DFT) matrix, defined as:
\[
F_n=\frac{1}{\sqrt{n}}\left( f_{jk} \right),\; f_{jk}=\omega _{n}^{jk},\; \omega _n=\mathrm{e}^{-\frac{2\pi \iota}{n}}, \; \iota=\sqrt{-1}.
\]

We also use functions in MATLAB to denote some operations:  \(\widehat{\mathcal{A}}:= \mathsf{fft}( \mathcal{A},[],3 ) \) and \(\mathcal{A}:= \mathsf{ifft}(\widehat{\mathcal{A}},[],3 ) \) for the Fast Fourier Transform (FFT) and the inverse FFT along the third dimension of tensors \(\mathcal{A}\) and \(\widehat{\mathcal{A}}\), respectively. \(\mathsf{diag}( \mathbf{v} )\) denotes the diagonal matrix formed by array \(\mathbf{v}=( v_1,v_2,\cdots ,v_n )\).

If $\mathsf{rank}\left(A\right) = r$, then the SVD  of $A\in\mathbb{R}^{m\times n}$ can be represented as
$$A=W\Sigma V^{\top},$$
where $\Sigma=\mathsf{diag}\left( \sigma _1\left( A \right) ,\sigma _2\left( A \right) ,\cdots ,\sigma _r\left( A \right),0,\cdots,0 \right)$, $\sigma _1\left( A \right) \geqslant \sigma _2\left( A \right) \geqslant \ldots \geqslant  \sigma _r\left( A \right) >0$, with $\sigma _i\left(A \right)$ being the $i$-th singular value, $W$ and $V$ are   the left and right singular vector matrices of $A$. We denote $\sigma_{\min}\left(A\right)$ as the smallest nonzero singular value for convenience. 

In the context of this paper, which extends the concept of coseparable NMF to tensors, we define coseparable NMF for reference as follows.
\begin{definition}\label{t1.1}
A matrix ${A}\in\mathbb{R}^{m\times n}_+$ is co-$(r_1,r_2)$ separable if there exists index sets $\mathbf{I}$, $\mathbf{J}$ and matrices ${P}_1\in\mathbb{R}^{m\times r_1}_+$, ${P}_2\in\mathbb{R}^{r_2\times n}_+$ such that ${A} = {P}_1{A}_{\mathbf{I}\mathbf{J}}{P}_2$, where $|\mathbf{I}|=r_1$, $|\mathbf{J}|=r_2$, $\left({P}_1\right)_{\mathbf{I}:}={I}_{r_1}$ and $\left({P}_2\right)_{:\mathbf{J}}={I}_{r_2}$. ${A}_{\mathbf{I}\mathbf{J}}$ is referred to as the core of matrix ${A}$.
\end{definition}

\subsection{Tensor t-product}
A tensor can be regarded as a high-dimensional array. As shown in   \Cref{f1}, the left-hand side is an $m\times n\times p$ third-order tensor. Figure \ref{f1} (a), (b), and (c) are called row, column, and tube fibers respectively; (d), (e), and (f) are called horizontal, lateral, and frontal slices respectively.
\begin{figure}[ht]
\begin{center}
\includegraphics[width=12cm]{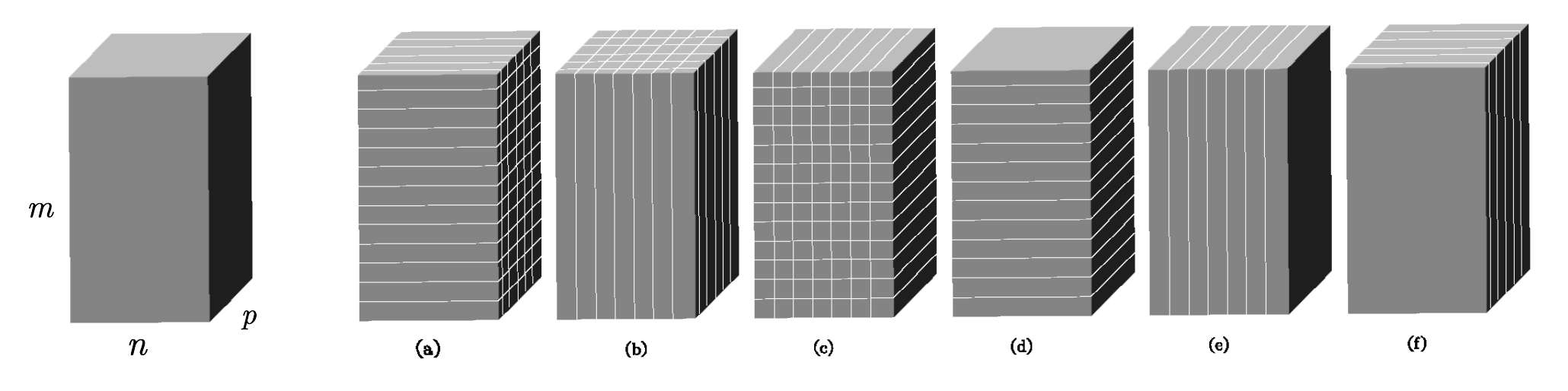}
\caption{Illustration of a third-order tensor}
\label{f1}
\end{center}
\end{figure}
The addition and subtraction of two $m\times n\times p$ tensors $\mathcal{A}$ and $\mathcal{B}$ is defined as
\begin{equation}\label{e2.1}
\left(\mathcal{A}\pm \mathcal{B}\right)_{ijk}:= \mathcal{A}_{ijk}\pm \mathcal{B}_{ijk},\quad  i\in \{ 1:m \},\ j\in \{1:n \},\ k\in \{1:p \}.
\end{equation}

In preparation for defining the tensor t-product, we need operations $\mathsf{bcirc}$, $\mathsf{unfold}$ and $\mathsf{fold}$.
\begin{definition}{\rm (\cite{Tproduct})}\label{t2.1}
Let $\mathcal{A}\in \mathbb{R}^{m\times n\times p}$. Then
\begin{equation}\label{e2.2}
{\mathsf{bcirc}}(\mathcal{A}):=\left[ \begin{matrix}
	\mathcal{A}_{::1}&		\mathcal{A}_{::p}&		\cdots&		\mathcal{A}_{::2}\\
	\mathcal{A}_{::2}&		\mathcal{A}_{::1}&		\cdots&		\mathcal{A}_{::3}\\
	\vdots&		\vdots&		\ddots&		\vdots\\
	\mathcal{A}_{::p}&		\mathcal{A}_{::,p-1}&		\cdots&		\mathcal{A}_{::1}\\
\end{matrix} \right],\quad
{\mathsf{unfold}}(\mathcal{A}):=\left[ \begin{array}{c}
	\mathcal{A}_{::1}\\
	\mathcal{A}_{::2}\\
	\vdots\\
	\mathcal{A}_{::p}\\
\end{array} \right] , \text{ and }
{\mathsf{fold}( \mathsf{unfold}}(\mathcal{A})) := \mathcal{A},
\end{equation}
where ${\mathsf{bcirc}}(\mathcal{A})$ is called the block circulant matrix generated by $\mathcal{A}$.
\end{definition}

\begin{definition}{\rm (\cite{Tproduct}, t-product)}\label{t2.2}
The t-product of $\mathcal{A}\in \mathbb{R}^{m\times n\times p}$ and $\mathcal{B}\in \mathbb{R}^{n\times q\times p}$ is denoted as $\mathcal{A}\ast \mathcal{B}\in \mathbb{R}^{m\times q\times p}$ and 
\begin{equation}\label{e2.3}
\mathcal{A}\ast \mathcal{B}:= \mathsf{fold} \left(\mathsf{bcirc} \left( \mathcal{A} \right) \mathsf{unfold}\left( \mathcal{B} \right)\right).
\end{equation}
\end{definition}

The $\mathsf{bcirc}\left( \mathcal{A} \right)$ can be transformed into the Fourier domain as a block diagonal matrix \cite{Tproduct}
\begin{align}\label{e2.4}
\left( F_{p}\otimes I_{m} \right) \mathsf{bcirc}\left( \mathcal{A} \right) \left( F_{p}^{\ast}\otimes I_{n} \right) :=\widehat{A}=\left[ \begin{matrix}
	\widehat{A}_1&		&		&		\\
	&		\widehat{A}_2&		&		\\
	&		&		\ddots&		\\
	&		&		&		\widehat{A}_{p}\\
\end{matrix} \right].
\end{align}
Note that \eqref{e2.3} is equivalent to
$\mathsf{bcirc} \left(\mathcal{A}\ast \mathcal{B}\right)= \mathsf{bcirc} \left( \mathcal{A} \right) \mathsf{bcirc}\left( \mathcal{B} \right)$, then
\begin{align}\label{e2.5}
\left( F_p\otimes I_m \right) \mathsf{bcirc}\left( \mathcal{A} \ast \mathcal{B} \right) \left( F_{p}^{\ast}\otimes I_q \right) = \left( F_p\otimes I_m \right) \mathsf{bcirc} \left( \mathcal{A} \right)\mathsf{bcirc}\left( \mathcal{B} \right) \left( F_{p}^{\ast}\otimes I_q \right) =\widehat{A}\widehat{B},
\end{align}
which means the t-product can be computed in the Fourier domain. We have summarized the details of the tensor t-product in \Cref{a1}. 

More precisely, we can fold the diagonal blocks $\widehat{A}_k$ in \eqref{e2.4} into a tensor $\widehat{\mathcal{A}}$ such that $\widehat{\mathcal{A}}_{::k}=\widehat{A}_k$, which is equivalent to applying the FFT along the third dimension of $\mathcal{A}$, i.e., $\widehat{\mathcal{A}}=\mathsf{fft}(\mathcal{A},[],3)$. The same procedure is applied to $\mathcal{B}$, then we can perform matrix multiplication for their corresponding frontal slices in the complex domain. Consequently, the t-product can be efficiently computed using the FFT, as shown in Algorithm \ref{a1}.
\begin{algorithm}
\caption{Tensor t-product}
\label{a1}
\begin{algorithmic}
\STATE $\mathbf{input}\ \mathcal{A}\in \mathbb{R}^{m\times n\times p}$ and $\mathcal{B}\in \mathbb{R}^{n\times q\times p}$
\STATE $\widehat{\mathcal{A}}= \mathsf{fft}\left( \mathcal{A},\left[\right] ,3 \right) $
\STATE $\widehat{\mathcal{B}}= \mathsf{fft}\left( \mathcal{B},\left[\right] ,3 \right) $
\FOR{$k=1:p$}
\STATE $\widehat{\mathcal{C}}\left(:,:,k\right)=\widehat{\mathcal{A}}\left(:,:,k\right) \widehat{\mathcal{B}}\left(:,:,k\right)$
\ENDFOR
\STATE ${\mathcal{C}}= \mathsf{ifft}\left( \widehat{\mathcal{C}},\left[\right] ,3 \right) $
\RETURN $\mathcal{C}=\mathcal{A}\ast\mathcal{B}$
\end{algorithmic}
\end{algorithm}

Now we review some definitions and technical results in the tensor t-product framework that will be used in the following sections.
\begin{definition}{\rm (\cite{Tproduct}, Identity tensor)}\label{t2.3}
A tensor $\mathcal{I}\in \mathbb{R}^{n\times n\times p}$ is the identity tensor, if $\mathcal{I}_{::1}=I_n$ and $\mathcal{I}_{::k}=O\in\mathbb{R}^{n\times n}$, the zero matrix for $k\in\{2:p\}$.
\end{definition}

\begin{definition}{\rm (\cite{Tproduct}, Transpose and orthogonal tensor)} \label{t2.4}
\\
$\mathrm{(a)}$ The transpose of $\mathcal{A}\in \mathbb{R}^{m\times n\times p}$ is $\mathcal{A}^{\top}\in \mathbb{R}^{n\times m\times p}$, where $\left(\mathcal{A}^{\top}\right)_{::1} =\left(\mathcal{A}_{::1}\right) ^{\top}$, $\left(\mathcal{A}^{\top}\right)_{::k} =\left(\mathcal{A}_{::p-k+2}\right) ^{\top}$, $k \in \{2:p\}$.
\\
$\mathrm{(b)}$ A tensor $\mathcal{A}\in \mathbb{R}^{m\times n\times p}$ is orthogonal, if $\mathcal{A}^{\top}\ast \mathcal{A}=\mathcal{A}\ast \mathcal{A}^{\top}=\mathcal{I}$, where $\mathcal{I}$ is the identity tensor, see \Cref{t2.3}.
\end{definition}

\begin{definition}{\rm (\cite{Tproduct, Tpinv}, Inverse and Moore-Penrose inverse)}\label{t2.5} 
\\
$\mathrm{(a)}$ A tensor $\mathcal{A}^{-1}\in \mathbb{R}^{n\times n\times p}$ is the inverse of $\mathcal{A}\in \mathbb{R}^{n\times n\times p}$, if $\mathcal{A}\ast\mathcal{A}^{-1}=\mathcal{A}^{-1}\ast\mathcal{A}=\mathcal{I}$, the identity tensor.
\\
$\mathrm{(b)}$ A tensor $\mathcal{A}^{\dag}\in \mathbb{R}^{n\times m\times p}$ is the Moore-Penrose inverse of $\mathcal{A}\in \mathbb{R}^{m\times n\times p}$, if $\mathcal{A}^{\dag}$ satisfies the following equations,
$$\mathcal{A}\ast \mathcal{A}^{\dag}\ast \mathcal{A}=\mathcal{A},\quad
\mathcal{A}^{\dag}\ast \mathcal{A}\ast \mathcal{A}^{\dag}=\mathcal{A}^{\dag},\quad
\left( \mathcal{A}\ast \mathcal{A}^{\dag} \right) ^{\top}=\mathcal{A}\ast \mathcal{A}^{\dag}, \quad
\left( \mathcal{A}^{\dag}\ast \mathcal{A} \right) ^{\top}=\mathcal{A}^{\dag}\ast \mathcal{A}.$$
\end{definition}

\begin{definition}{\rm (\cite{Tproduct, TCURproof}, Tensor norm)}\label{t2.6} 
The Frobenius norm and the spectral norm of tensor $\mathcal{A}\in\mathbb{R}^{m\times n\times p}$ are defined as
\begin{align}\label{e2.6}
\lVert \mathcal{A} \rVert _{\fro}:= \sqrt{\sum_k{\lVert \mathcal{A}_{::k} \rVert^2 _{\fro}}}=\sqrt{\sum_{i,j,k}{\left| \mathcal{A}_{ijk} \right|^2}},\quad
\lVert \mathcal{A} \rVert _2:= \lVert \widehat{A} \rVert _2.
\end{align}
\end{definition}

\begin{remark}\label{t2.7}
Definition \ref{t2.2} indicates that $\left\| \mathsf{bcirc}\left( \mathcal{A} \right) \right\| _{\fro}^{2}=p\left\| \mathcal{A} \right\| _{\fro}^{2}$.
Since the Frobenius norm is unitarily invariant, we can use \eqref{e2.4} to derive
\begin{align}\label{e7}
p\left\| \mathcal{A} \right\| _{\fro}^{2}=\left\| \left( F_p\otimes I_m \right) \mathsf{bcirc}\left( \mathcal{A} \right) \left( F_{p}^{\ast}\otimes I_n \right) \right\| _{\fro}^{2}=\left\| \widehat{A} \right\| _{\fro}^{2}=\sum_{k=1}^p{\left\| \widehat{A}_k \right\| _{\fro}^{2}}=\sum_{k=1}^p{\left\| \widehat{\mathcal{A}} _{::k} \right\| _{\fro}^{2}}=\left\| \widehat{\mathcal{A} } \right\| _{\fro}^{2}.
\end{align}
\end{remark}

We call a tensor f-diagonal if each of its frontal slices is a diagonal matrix \cite{Tproduct}. In the t-product framework, the tensor still has its SVD, called t-SVD, which can factorize a tensor into two orthogonal tensors and an f-diagonal tensor.

\begin{theorem}{\rm (\cite{Tproduct}, t-SVD)}\label{t2.8}
A tensor $\mathcal{A}\in \mathbb{R}^{m\times n\times p}$ can be decomposed by
\begin{equation}\label{e2.8}
\mathcal{A}=\mathcal{W}\ast \varSigma \ast \mathcal{V}^{\top},
\end{equation}
where $\mathcal{W}\in \mathbb{R}^{m\times m\times p}$ and $\mathcal{V}\in \mathbb{R}^{n\times n\times p}$ are orthogonal and $\varSigma$ is f-diagonal.
\end{theorem}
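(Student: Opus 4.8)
The plan is to reduce the tensor t-SVD to the ordinary matrix SVD by passing to the Fourier domain, where the t-product becomes block-diagonal matrix multiplication, as recorded in \eqref{e2.4} and \eqref{e2.5}. First I would apply the FFT along the third dimension to obtain $\widehat{\mathcal{A}}$, whose frontal slices are the diagonal blocks $\widehat{A}_1,\dots,\widehat{A}_p$ of the block-diagonalized $\mathsf{bcirc}(\mathcal{A})$. Since each $\widehat{A}_k\in\mathbb{C}^{m\times n}$ is an ordinary (complex) matrix, I would compute its usual SVD $\widehat{A}_k=\widehat{W}_k\widehat{\Sigma}_k\widehat{V}_k^{\ast}$, where $\widehat{W}_k$ and $\widehat{V}_k$ are unitary and $\widehat{\Sigma}_k$ is a rectangular diagonal matrix of singular values.

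Next I would assemble these slice-wise factors into tensors in the Fourier domain: define $\widehat{\mathcal{W}},\widehat{\varSigma},\widehat{\mathcal{V}}$ by stacking $\widehat{W}_k,\widehat{\Sigma}_k,\widehat{V}_k$ as their $k$-th frontal slices, then pull everything back to the spatial domain via the inverse FFT to get $\mathcal{W}=\mathsf{ifft}(\widehat{\mathcal{W}},[\,],3)$, and similarly $\varSigma$ and $\mathcal{V}$. By the equivalence in \eqref{e2.5}, slice-wise multiplication in the Fourier domain corresponds exactly to the t-product in the spatial domain, so $\mathcal{A}=\mathcal{W}\ast\varSigma\ast\mathcal{V}^{\top}$ follows once the factors are matched correctly. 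I would then verify the three required structural properties: that $\varSigma$ is f-diagonal (each frontal slice $\widehat{\Sigma}_k$ is diagonal, a property preserved under the inverse FFT acting only along the third mode), that $\mathcal{W}$ and $\mathcal{V}$ are orthogonal tensors in the sense of \Cref{t2.4}(b), and that the conjugate-transpose $\widehat{V}_k^{\ast}$ in each slice corresponds to the tensor transpose $\mathcal{V}^{\top}$ of \Cref{t2.4}(a).

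The main obstacle, and the step requiring the most care, is the bookkeeping between the complex conjugate transpose in the Fourier domain and the real tensor transpose defined in \Cref{t2.4}(a), together with ensuring the resulting $\mathcal{W},\varSigma,\mathcal{V}$ are real tensors rather than complex ones. Because $\mathcal{A}$ is real, the blocks satisfy the conjugate-symmetry relation $\widehat{A}_k=\overline{\widehat{A}_{p-k+2}}$; to guarantee that the inverse FFT returns real factors one must choose the slice-wise SVDs compatibly, i.e.\ take the SVD of $\widehat{A}_{p-k+2}$ to be the conjugate of that of $\widehat{A}_k$. The definition of $\mathcal{A}^{\top}$ in \Cref{t2.4}(a), which reverses and conjugate-transposes the frontal slices, is precisely engineered so that $\widehat{(\mathcal{V}^{\top})}_k=(\widehat{V}_k)^{\ast}$; I would verify this identity explicitly and confirm that it makes $\mathcal{W}^{\top}\ast\mathcal{W}$ and $\mathcal{V}^{\top}\ast\mathcal{V}$ equal the identity tensor $\mathcal{I}$ of \Cref{t2.3}. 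The orthogonality check then reduces, via \eqref{e2.5}, to the slice-wise unitarity $\widehat{W}_k^{\ast}\widehat{W}_k=I$ and $\widehat{V}_k^{\ast}\widehat{V}_k=I$, which holds by construction. Everything else is routine once this correspondence is pinned down.
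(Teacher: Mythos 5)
Your proposal is correct and follows essentially the same route as the original proof in the cited reference \cite{Tproduct} (the paper itself states this theorem as a known result and gives no proof): block-diagonalize in the Fourier domain, take slice-wise matrix SVDs, and fold back via the inverse FFT. You also correctly flag the one genuinely delicate point --- enforcing the conjugate-symmetry pairing $\widehat{A}_k=\overline{\widehat{A}_{p-k+2}}$ when choosing the slice-wise SVDs so that the inverse FFT returns real factors --- so nothing essential is missing.
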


Now we will introduce the tensor multirank and the tensor tubalrank, where the multirank guarantees the t-CUR decomposition, and the tubalrank is useful in the t-CUR sampling theory in the following sections.

\begin{definition}{\rm (\cite{tensorrank}, Multirank and tubalrank)}\label{t2.9}
\\
$\mathrm{(a)}$ Let the tensor $\mathcal{A}\in \mathbb{R}^{m\times n\times p}$ satisfy $ \mathsf{rank}\left( \widehat{A}_k \right) =r_k $, $k\in \{1:p\}$, then the multirank of $\mathcal{A}$, denoted as $\mathsf{rank}_m\left( \mathcal{A} \right)$, is defined as the vector $\mathbf{r}=\left( r_1,r_2,\cdots ,r_p \right) $.
\\
$\mathrm{(b)}$ Let $\mathcal{A}=\mathcal{W}\ast \varSigma \ast \mathcal{V}^{\top}$ be the t-SVD of $\mathcal{A}\in \mathbb{R}^{m\times n\times p}$, then the tubalrank of $\mathcal{A}$, denoted as $\mathsf{rank}_t\left( \mathcal{A} \right)$, is defined as
\begin{align*}
\mathsf{rank}_t\left( \mathcal{A} \right)= \left| \left\{ i: \varSigma_{ii:} \ne 0 \right\}\right|,
\end{align*}
the number of nonzero singular values tubes of $\varSigma$. 
\end{definition}

\begin{theorem}{\rm (\cite{TCUR}, t-CUR decomposition)}\label{t2.10}
Let the tensor $\mathcal{A}\in \mathbb{R}^{m\times n\times p}$ have $\mathsf{rank}_m\left( \mathcal{A} \right)=\mathbf{r}$ and $\mathsf{rank}_t\left( \mathcal{A} \right)={r}$. Index sets $\mathbf{I}\subset \{1:m\}$ and $\mathbf{J}\subset \{1:n\} $ satisfy $\left| \mathbf{I} \right|, \left| \mathbf{J} \right|\geqslant r$. $\mathcal{C}=\mathcal{A}_{:\mathbf{J}:} $, $\mathcal{R}=\mathcal{A}_{\mathbf{I}::} $ and $\mathcal{U}=\mathcal{A}_{\mathbf{I}\mathbf{J}:} $. If $\mathsf{rank}_m\left( \mathcal{U} \right) =\mathsf{rank}_m\left( \mathcal{A} \right) =\mathbf{r}$, then
\begin{equation}\label{e2.9}
\mathcal{A}=\mathcal{C}\ast \mathcal{U}^{\dag}\ast \mathcal{R}.
\end{equation}
\end{theorem}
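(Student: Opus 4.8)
The plan is to pass to the Fourier domain, where \eqref{e2.5} decouples the t-product into slice-wise matrix multiplication, and then to recognize the claim as $p$ independent instances of the \emph{matrix} CUR decomposition. Write $\widehat{\mathcal{A}} = \mathsf{fft}(\mathcal{A},[],3)$ and let $\widehat{A}_k = \widehat{\mathcal{A}}_{::k}$ denote its $k$-th frontal slice. By \eqref{e2.5} and the invertibility of the FFT along the third dimension, the identity $\mathcal{A} = \mathcal{C}\ast\mathcal{U}^{\dag}\ast\mathcal{R}$ holds if and only if
\begin{equation*}
\widehat{A}_k = \widehat{C}_k\,\big(\widehat{\mathcal{U}^{\dag}}\big)_{::k}\,\widehat{R}_k, \qquad k\in\{1:p\}.
\end{equation*}
So it suffices to establish this identity one slice at a time.

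First I would record two commutation facts. Since the FFT in \Cref{a1} acts only along the third (tube) dimension, extracting a subtensor along the first two dimensions commutes with it, giving $\widehat{C}_k = (\widehat{A}_k)_{:\mathbf{J}}$, $\widehat{R}_k = (\widehat{A}_k)_{\mathbf{I}:}$, and $\widehat{U}_k = (\widehat{A}_k)_{\mathbf{IJ}}$. Next I would show that the Moore--Penrose inverse of \Cref{t2.5}(b) transforms slice-wise, i.e. $\big(\widehat{\mathcal{U}^{\dag}}\big)_{::k} = (\widehat{U}_k)^{\dag}$. To this end, apply \eqref{e2.5} to each of the four defining equations for $\mathcal{U}^{\dag}$, using that the tensor transpose of \Cref{t2.4}(a) becomes the slice-wise conjugate transpose in the Fourier domain, so that $\widehat{(\mathcal{U}^{\top})}_{::k} = (\widehat{U}_k)^{\ast}$. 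This converts the four tensor Penrose conditions into the four matrix Penrose conditions relating $\big(\widehat{\mathcal{U}^{\dag}}\big)_{::k}$ and $\widehat{U}_k$; uniqueness of the matrix pseudoinverse then yields the claim.

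With these reductions, the rank hypotheses translate cleanly through \Cref{t2.9}(a): $\mathsf{rank}_m(\mathcal{A}) = \mathsf{rank}_m(\mathcal{U}) = \mathbf{r}$ is exactly $\mathsf{rank}(\widehat{U}_k) = \mathsf{rank}(\widehat{A}_k) = r_k$ for every $k$, while the tubalrank bound $|\mathbf{I}|,|\mathbf{J}| \geq r$ guarantees each core slice has at least $r_k$ rows and columns (as $r \geq r_k$), making rank $r_k$ attainable. The slice-wise identity then reads
\begin{equation*}
\widehat{A}_k = (\widehat{A}_k)_{:\mathbf{J}}\,\big((\widehat{A}_k)_{\mathbf{IJ}}\big)^{\dag}\,(\widehat{A}_k)_{\mathbf{I}:},
\end{equation*}
which is precisely the matrix CUR decomposition for the rank-$r_k$ matrix $\widehat{A}_k$ whose $(\mathbf{I},\mathbf{J})$ core has full rank $r_k$. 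Applying the matrix CUR theorem \cite{CUR1, CUR2, TCURproof} to each $k$ and taking the inverse FFT recovers $\mathcal{A} = \mathcal{C}\ast\mathcal{U}^{\dag}\ast\mathcal{R}$.

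I expect the pseudoinverse commutation step to be the main obstacle: one must verify carefully that the Fourier transform sends the tensor transpose to the slice-wise conjugate transpose and that this makes the four tensor Penrose conditions decouple into independent matrix conditions. Once this correspondence is in place, the remainder is a direct, parallel application of the matrix CUR result across the $p$ frontal slices.
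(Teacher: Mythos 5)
Your argument is correct. Note that the paper does not prove Theorem~\ref{t2.10} at all---it is imported verbatim from \cite{TCUR}---so there is no in-paper proof to compare against; your Fourier-domain reduction (slice subtensors and the Moore--Penrose inverse commute with the FFT along tubes, the multirank hypothesis becomes $\mathsf{rank}(\widehat{U}_k)=\mathsf{rank}(\widehat{A}_k)$ for each $k$, and the matrix CUR theorem applies slice-by-slice) is exactly the standard route taken in that reference, with the one genuinely delicate step, the identity $\widehat{(\mathcal{U}^{\dag})}_{::k}=(\widehat{U}_k)^{\dag}$ via uniqueness of the complex matrix pseudoinverse, handled correctly.
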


\begin{figure}[ht]
\begin{center}
\includegraphics[width=8.7cm]{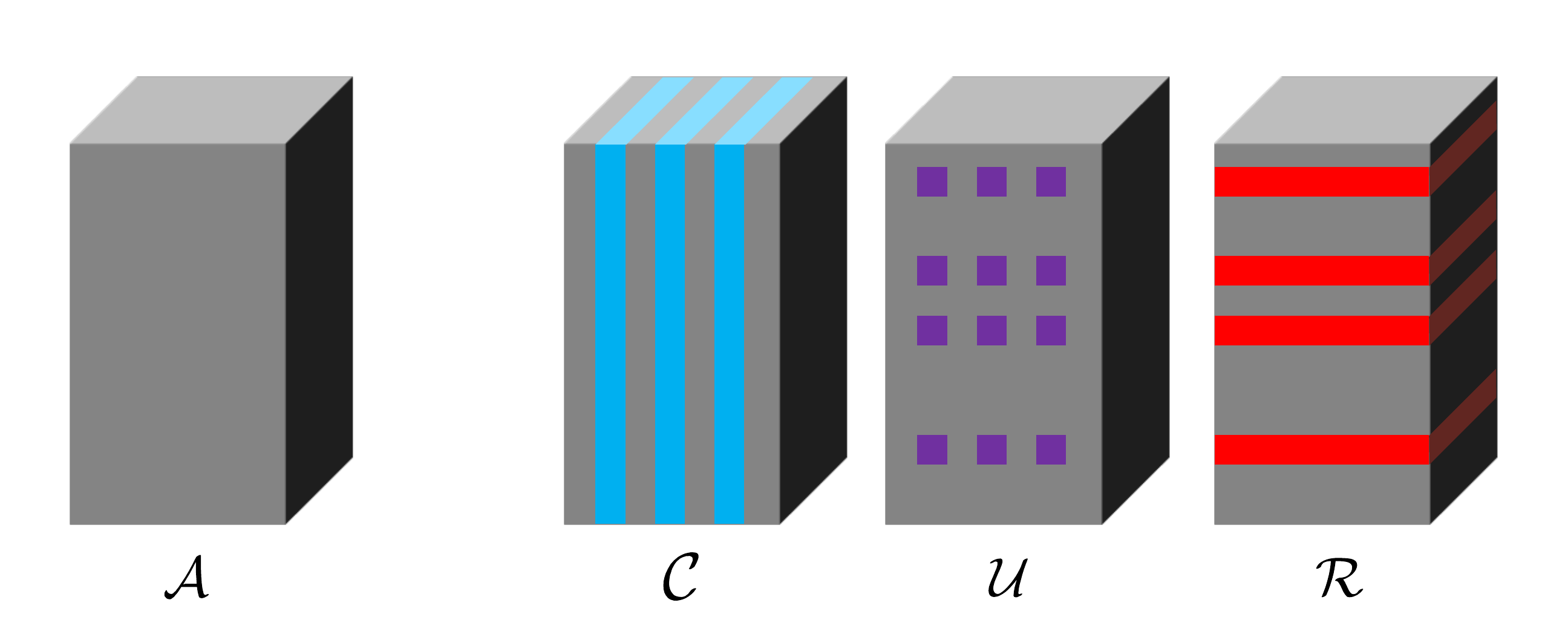}
\caption{Illustration of t-CUR decomposition.  The horizontal subtensor $\mathcal{R}$ is highlighted in red,  the lateral subtensor $\mathcal{C}$ in blue, and their intersection $\mathcal{U}$ in purple. The original tensor $\mathcal{A}\approx\mathcal{C}*\mathcal{U}^{\dagger}*\mathcal{R}$, and the approximation is exact when $\rank_{m}(\mathcal{U})=\rank_m(\mathcal{T})$.}
\label{f2}
\end{center}
\end{figure}

\section{Coseparable NTF} \label{sec:CoNTF}
In this section, we will discuss coseparability of tensors and explore their characterizations and properties.
\begin{definition}{\rm (Coseparability)}\label{t3.1}
A tensor $\mathcal{A}\in\mathbb{R}^{m\times n\times p}_+$ is co-$(r_1,r_2)$ separable if there exists index sets $\mathbf{I}$, $\mathbf{J}$ and tensors $\mathcal{P}_1\in\mathbb{R}^{m\times r_1\times p}_+$, $\mathcal{P}_2\in\mathbb{R}^{r_2\times n\times p}_+$ such that
\begin{equation}\label{e3.1}
\mathcal{A} = \mathcal{P}_1\ast \mathcal{A}_{\mathbf{I}\mathbf{J}:}\ast\mathcal{P}_2,
\end{equation}
where $|\mathbf{I}|=r_1$, $|\mathbf{J}|=r_2$, $\left(\mathcal{P}_1\right)_{\mathbf{I}::}=\mathcal{I}_{r_1}$ and $\left(\mathcal{P}_2\right)_{:\mathbf{J}:}=\mathcal{I}_{r_2}$. $\mathcal{A}_{\mathbf{I}\mathbf{J}:}$ is referred to as the core of tensor $\mathcal{A}$.
\end{definition}

 Notice that the tensor in \cref{t3.1} is nonnegative, so  \cref{e2.4} can only be used to compute t-product as \cref{e2.5}, while it may not retain the coseparable NMF properties. Therefore, we utilize \cref{t2.1} to derive our coseparable NTF properties and algorithms in the following parts. 

\begin{proposition}{\rm (Equivalent characterization)}\label{t3.2}
Let tensor $\mathcal{A}\in\mathbb{R}^{m\times n\times p}_+$, and permutation tensors $\varPi _1\in\mathbb{R}^{n\times n\times p}$, $\varPi _2\in\mathbb{R}^{m\times m\times p}$, where except for $\left(\varPi _1\right)_{::1}\in\{0,1\}^{n\times n}$, $\left(\varPi _2\right)_{::1}\in\{0,1\}^{m\times m}$, the other entries of $\varPi _1$ and $\varPi _2$ are all zeros. And let tensors $\mathcal{S}\in\mathbb{R}^{r_1\times r_2\times p}_+$, $\mathcal{M}\in\mathbb{R}^{\left(m-r_1\right)\times r_1\times p}_+$, $\mathcal{H}\in\mathbb{R}^{r_2\times \left(n-r_2\right)\times p}_+$,
\begin{align}\label{e3.2}
\mathcal{X} =\varPi _1\ast \left[ \begin{matrix}
	\mathcal{I} _{r_1}&		\mathcal{O}\\
	\mathcal{M}&		\mathcal{O}\\
\end{matrix} \right] \ast \varPi _{1}^{\top},\quad \mathcal{Y} =\varPi _{2}^{\top}\ast \left[ \begin{matrix}
	\mathcal{I} _{r_2}&		\mathcal{H}\\
	\mathcal{O}&		\mathcal{O}\\
\end{matrix} \right] \ast \varPi _2,
\end{align}
where $\mathcal{O}$ denotes a tensor whose entries are all zero, and henceforth, all occurrences of $\mathcal{O}$ in the following paper refer to this specific tensor for convenience. Then the following are equivalent. 
\begin{enumerate}[label=(\roman*),leftmargin= .5in]
\item $\mathcal{A}$ is co-$(r_1,r_2)$-separable.
\item $\mathcal{A}$ can be written as
\begin{align}\label{e3.3}
\mathcal{A} =\varPi _1\ast \left[ \begin{matrix}
	\mathcal{S}&		\mathcal{S} \ast \mathcal{H}\\
	\mathcal{M} \ast \mathcal{S}&		\mathcal{M} \ast \mathcal{S} \ast \mathcal{H}\\
\end{matrix} \right] \ast \varPi _2
\end{align}
\item $\mathcal{A}$ can be written as
\begin{align}\label{e3.4}
\mathcal{A} =\mathcal{X}\ast \mathcal{A} \ast \mathcal{Y}
\end{align}
\item $\mathcal{A}$ can be written as
\begin{align}\label{e3.5}
\mathcal{A} =\mathcal{X}\ast \mathcal{A},\quad \mathcal{A} = \mathcal{A}\ast \mathcal{Y}
\end{align}
\end{enumerate}
\end{proposition}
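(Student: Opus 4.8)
The plan is to prove the four statements equivalent by running the cycle (i)$\Rightarrow$(ii)$\Rightarrow$(iii)$\Rightarrow$(iv)$\Rightarrow$(i), reading $\varPi_1$ as the row permutation (left factor) and $\varPi_2$ as the column permutation (right factor) as in \eqref{e3.3}. Before starting I would record three elementary facts about $\ast$, each holding because the t-product acts independently on the frontal slices: (a) block-partitioned tensors multiply by the usual block-matrix rule with $\ast$ in place of scalar multiplication; (b) index restriction commutes with the outer factor, $(\mathcal{B}\ast\mathcal{D})_{\mathbf{I}::}=\mathcal{B}_{\mathbf{I}::}\ast\mathcal{D}$ and $(\mathcal{B}\ast\mathcal{D})_{:\mathbf{J}:}=\mathcal{B}\ast\mathcal{D}_{:\mathbf{J}:}$; and (c) the permutation tensors are orthogonal, $\varPi_i^{\top}\ast\varPi_i=\mathcal{I}$, with $\varPi_1^{\top}\ast(\cdot)$ permuting the horizontal slices of every frontal slice simultaneously and $(\cdot)\ast\varPi_2^{\top}$ permuting the lateral slices. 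I abbreviate by $E_1$ and $E_2$ the two inner block tensors of \eqref{e3.2}, so $\mathcal{X}=\varPi_1\ast E_1\ast\varPi_1^{\top}$ and $\mathcal{Y}=\varPi_2^{\top}\ast E_2\ast\varPi_2$.

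For (i)$\Rightarrow$(ii) I would take $\varPi_1$ to bring $\mathbf{I}$ to the leading $r_1$ positions and $\varPi_2$ to bring $\mathbf{J}$ to the leading $r_2$ positions. Since $(\mathcal{P}_1)_{\mathbf{I}::}=\mathcal{I}_{r_1}$, reordering rows gives $\mathcal{P}_1=\varPi_1\ast\left[\begin{smallmatrix}\mathcal{I}_{r_1}\\\mathcal{M}\end{smallmatrix}\right]$ for a nonnegative $\mathcal{M}$, and likewise $\mathcal{P}_2=\left[\begin{smallmatrix}\mathcal{I}_{r_2}&\mathcal{H}\end{smallmatrix}\right]\ast\varPi_2$; writing $\mathcal{S}=\mathcal{A}_{\mathbf{I}\mathbf{J}:}$ and expanding $\left[\begin{smallmatrix}\mathcal{I}_{r_1}\\\mathcal{M}\end{smallmatrix}\right]\ast\mathcal{S}\ast\left[\begin{smallmatrix}\mathcal{I}_{r_2}&\mathcal{H}\end{smallmatrix}\right]$ reproduces the $2\times2$ block tensor of \eqref{e3.3}, with $\mathcal{S},\mathcal{M},\mathcal{H}$ nonnegative by inheritance from $\mathcal{A},\mathcal{P}_1,\mathcal{P}_2$. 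For (ii)$\Rightarrow$(iii) I substitute $\mathcal{A}=\varPi_1\ast\mathcal{B}\ast\varPi_2$ (with $\mathcal{B}$ that block tensor), cancel the permutations via (c), and verify the block identities $E_1\ast\mathcal{B}=\mathcal{B}$ and $\mathcal{B}\ast E_2=\mathcal{B}$ (the zero block column of $E_1$ and zero block row of $E_2$ discard the redundant blocks, which $\mathcal{M}$ and $\mathcal{H}$ regenerate). This gives $\mathcal{X}\ast\mathcal{A}=\mathcal{A}=\mathcal{A}\ast\mathcal{Y}$, hence in particular $\mathcal{A}=\mathcal{X}\ast\mathcal{A}\ast\mathcal{Y}$.

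For (iii)$\Rightarrow$(iv) the crucial remark is that $E_1$ and $E_2$ are idempotent, $E_1\ast E_1=E_1$ and $E_2\ast E_2=E_2$, so by (c) the oblique projections $\mathcal{X}$ and $\mathcal{Y}$ are idempotent as well. Left-multiplying $\mathcal{A}=\mathcal{X}\ast\mathcal{A}\ast\mathcal{Y}$ by $\mathcal{X}$ gives $\mathcal{X}\ast\mathcal{A}=\mathcal{X}\ast\mathcal{X}\ast\mathcal{A}\ast\mathcal{Y}=\mathcal{X}\ast\mathcal{A}\ast\mathcal{Y}=\mathcal{A}$, and right-multiplying by $\mathcal{Y}$ gives $\mathcal{A}\ast\mathcal{Y}=\mathcal{A}$, which is (iv). For the closing step (iv)$\Rightarrow$(i) I use the factorizations $\mathcal{X}=\mathcal{P}_1\ast\left[\begin{smallmatrix}\mathcal{I}_{r_1}&\mathcal{O}\end{smallmatrix}\right]\ast\varPi_1^{\top}$ and $\mathcal{Y}=\varPi_2^{\top}\ast\left[\begin{smallmatrix}\mathcal{I}_{r_2}\\\mathcal{O}\end{smallmatrix}\right]\ast\mathcal{P}_2$. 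Then $\mathcal{A}=\mathcal{X}\ast\mathcal{A}$ collapses, using (b)--(c) and $\left[\begin{smallmatrix}\mathcal{I}_{r_1}&\mathcal{O}\end{smallmatrix}\right]\ast(\varPi_1^{\top}\ast\mathcal{A})=\mathcal{A}_{\mathbf{I}::}$, to $\mathcal{A}=\mathcal{P}_1\ast\mathcal{A}_{\mathbf{I}::}$, and symmetrically $\mathcal{A}=\mathcal{A}\ast\mathcal{Y}$ gives $\mathcal{A}=\mathcal{A}_{:\mathbf{J}:}\ast\mathcal{P}_2$. Restricting the first identity to the columns $\mathbf{J}$ yields $\mathcal{A}_{:\mathbf{J}:}=\mathcal{P}_1\ast\mathcal{A}_{\mathbf{I}\mathbf{J}:}$, and inserting this into the second produces $\mathcal{A}=\mathcal{P}_1\ast\mathcal{A}_{\mathbf{I}\mathbf{J}:}\ast\mathcal{P}_2$ with $(\mathcal{P}_1)_{\mathbf{I}::}=\mathcal{I}_{r_1}$, $(\mathcal{P}_2)_{:\mathbf{J}:}=\mathcal{I}_{r_2}$ and all factors nonnegative, i.e.\ coseparability in the sense of \cref{t3.1}.

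I expect the main obstacle to be establishing the three housekeeping facts (a)--(c) cleanly, since $\ast$ is defined through $\bcirc/\mathsf{unfold}/\mathsf{fold}$ rather than entrywise. Fact (b) — that restriction in a fixed mode commutes with the t-product — is the one I would treat most carefully, because it is precisely what lets me pass between the core $\mathcal{A}_{\mathbf{I}\mathbf{J}:}$ and the row/column subtensors, and I must confirm that deleting horizontal (resp.\ lateral) slices of the left (resp.\ right) factor deletes exactly the corresponding slices of the product. Once (a)--(c) are in hand, every implication reduces to a short block computation, and the idempotency observation makes (iii)$\Rightarrow$(iv) essentially immediate; this is the conceptual heart of the argument.
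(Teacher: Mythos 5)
Your proposal is correct, and the computational core (conjugating block tensors by the permutation tensors $\varPi_1,\varPi_2$ and expanding with the block-multiplication rule) is the same as the paper's. The logical organization differs in a way worth noting. The paper proves (i)$\Leftrightarrow$(ii), then (ii)$\Leftrightarrow$(iii), then (ii)$\Rightarrow$(iv) by the direct block computation $\mathcal{X}\ast\mathcal{A}=\mathcal{A}$, and finally the trivial (iv)$\Rightarrow$(iii); the loop back to (i) is closed through (ii). You instead run the single cycle (i)$\Rightarrow$(ii)$\Rightarrow$(iii)$\Rightarrow$(iv)$\Rightarrow$(i), which forces you to supply two implications the paper does not prove directly: (iii)$\Rightarrow$(iv), which you obtain elegantly from the idempotency $\mathcal{X}\ast\mathcal{X}=\mathcal{X}$ and $\mathcal{Y}\ast\mathcal{Y}=\mathcal{Y}$ (a genuinely nice observation absent from the paper, and arguably the cleanest way to see why the one-sided identities follow from the two-sided one); and a direct (iv)$\Rightarrow$(i) via the factorization $\mathcal{X}=\mathcal{P}_1\ast\left[\begin{smallmatrix}\mathcal{I}_{r_1}&\mathcal{O}\end{smallmatrix}\right]\ast\varPi_1^{\top}$, yielding $\mathcal{A}=\mathcal{P}_1\ast\mathcal{A}_{\mathbf{I}::}$ and $\mathcal{A}=\mathcal{A}_{:\mathbf{J}:}\ast\mathcal{P}_2$ and then the core by restriction. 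Your route buys a tighter logical structure (every statement reached by one explicit chain) at the cost of having to justify the ``housekeeping'' fact that slice restriction commutes with the t-product in the appropriate mode; that fact is true (it is immediate from $\widehat{(\mathcal{B}\ast\mathcal{D})}_k=\widehat{B}_k\widehat{D}_k$ and the fact that the FFT acts only along the third mode, so restricting the first index of the left factor restricts the rows of each Fourier slice), and the paper implicitly uses the same fact when it identifies $\mathcal{A}_{\mathbf{IJ}:}$ with the leading block of $\varPi_1^{\top}\ast\mathcal{A}\ast\varPi_2^{\top}$, so you should state and prove it once as you plan to. No gap; both arguments are complete once that lemma is recorded.
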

\begin{proof}
(i) $\Leftrightarrow $ (ii). Transforming \eqref{e3.3} into the Fourier domain like \eqref{e2.5}, we have the block diagonal form, and the diagonal block $\left(\widehat{\Pi }_1\right)_k=\left(\varPi _1\right)_{::1}$, $\left(\widehat{\Pi }_2\right)_k=\left(\varPi _2\right)_{::1}$, $k\in\{1:p\}$. We choose each $\left(\widehat{\Pi }_1\right)_k$ such that it moves $\left(\widehat{A}_k\right)_{\mathbf{I:}}$ to the first $r_1$ rows of $\widehat{A}_k$, and choose each $\left(\widehat{\Pi }_2\right)_k$ such that it moves $\left(\widehat{A}_k\right)_{\mathbf{:J}}$ to the first $r_2$ columns of $\widehat{A}_k$, which means $\left(\widehat{A}_k\right)_{\mathbf{I}\mathbf{J}}$ is the $r_1\times r_2$ block in top left of $\left(\widehat{\Pi}_1^\top\right)_k\widehat{A}_k\left(\widehat{\Pi}_2^\top\right)_k$. Restoring them into the tensor form, we have $\mathcal{A}_{\mathbf{I}\mathbf{J}:} = \left(\varPi_1^\top\ast \mathcal{A}\ast\varPi_2^\top\right)_{1:r_1,1:r_2,:}$. Therefore, let $\mathcal{A}_{\mathbf{IJ}:}=\mathcal{S}$, $\mathcal{P}_1=\varPi_1\ast\left[ \begin{matrix}
	\mathcal{I}_{r_1}\\
	\mathcal{M}\\
\end{matrix} \right]$, $\mathcal{P}_2 = \left[ \begin{matrix}
	\mathcal{I} _{r_2}&		\mathcal{H}\\
\end{matrix} \right] \ast \varPi _2$, then $\mathcal{A} = \mathcal{P}_1\ast\mathcal{A}_{\mathbf{IJ}:}\ast\mathcal{P}_2$, the equivalent characterization is proved.
\\
(ii) $\Leftrightarrow $ (iii). Let $\mathcal{A}'=\left[ \begin{matrix}
	\mathcal{S}&		\mathcal{S} \ast \mathcal{H}\\
	\mathcal{M} \ast \mathcal{S}&		\mathcal{M} \ast \mathcal{S} \ast \mathcal{H}\\
\end{matrix} \right]$, then $\mathcal{A}'=\varPi_1^\top\ast\mathcal{A}\ast\varPi_2^\top$. Note that
$\mathcal{A}' =\left[ \begin{matrix}
	\mathcal{I} _{r_1}&		\mathcal{O}\\
	\mathcal{M}&		\mathcal{O}\\
\end{matrix} \right] \ast \mathcal{A}'\ast\left[ \begin{matrix}
	\mathcal{I} _{r_2}&		\mathcal{H}\\
	\mathcal{O}&		\mathcal{O}\\
\end{matrix} \right],$ therefore
$$\mathcal{A} =\varPi_1\ast\left[ \begin{matrix}
	\mathcal{I} _{r_1}&		\mathcal{O}\\
	\mathcal{M}&		\mathcal{O}\\
\end{matrix} \right] \ast \varPi_1^\top\ast\mathcal{A}\ast\varPi_2^\top\ast\left[ \begin{matrix}
	\mathcal{I} _{r_2}&		\mathcal{H}\\
	\mathcal{O}&		\mathcal{O}\\
\end{matrix} \right]\ast\varPi_2=\mathcal{X}\ast\mathcal{A}\ast\mathcal{Y}.$$
\\
(ii) $\Rightarrow $ (iv). 
\begin{align*}
\mathcal{X} \ast \mathcal{A} &=\varPi _1\ast \left[ \begin{matrix}
	\mathcal{I} _{r_1}&		\mathcal{O}\\
	\mathcal{M}&		\mathcal{O}\\
\end{matrix} \right] \ast \varPi _{1}^{\top}\ast \varPi _1\ast \left[ \begin{matrix}
	\mathcal{S}&		\mathcal{S} \ast \mathcal{H}\\
	\mathcal{M} \ast \mathcal{S}&		\mathcal{M} \ast \mathcal{S} \ast \mathcal{H}\\
\end{matrix} \right] \ast \varPi _2
\\
&=\varPi _1\ast \left[ \begin{matrix}
	\mathcal{I} _{r_1}&		\mathcal{O}\\
	\mathcal{M}&		\mathcal{O}\\
\end{matrix} \right] \ast \left[ \begin{matrix}
	\mathcal{S}&		\mathcal{S} \ast \mathcal{H}\\
	\mathcal{M} \ast \mathcal{S}&		\mathcal{M} \ast \mathcal{S} \ast \mathcal{H}\\
\end{matrix} \right] \ast \varPi _2=\mathcal{A},
\end{align*}
and it is easy to prove $\mathcal{A} = \mathcal{A}\ast \mathcal{Y}$ in the same way.
\\
(iv) $\Rightarrow $ (iii). $\mathcal{A} =\mathcal{X}\ast \mathcal{A}$ and $\mathcal{A} = \mathcal{A}\ast \mathcal{Y}$ yield $\mathcal{A} =\mathcal{X}\ast \mathcal{A}=\mathcal{X}\ast \mathcal{A}\ast \mathcal{Y}$.
\end{proof}

\begin{remark}\label{t3.3}
Proposition \ref{t3.2} plays a crucial role in our analysis. Characterization (ii) reveals the structure of the coseparable tensor, aiding in the establishment of a connection between the co-($r_1,r_2$)-separable form and the t-CUR decomposition, as detailed in \Cref{t3.5}. Furthermore, characterizations (iii) and (iv) offer methods to identify the coseparable cores, outlined in \Cref{t3.6} and the subsequent discussions, thereby enriching our comprehension and practical implementation of these concepts.
\end{remark}

\begin{theorem}{\rm (Scaling)}
Tensor $\mathcal{A}\in\mathbb{R}_+^{m\times n\times p}$ is co-$\left(r_1, r_2\right)$-separable if and only if $\mathcal{A}^{\mathrm{scale}}=\mathcal{D}_r\ast\mathcal{A}\ast\mathcal{D}_c$ is co-$\left(r_1, r_2\right)$-separable for any $f$-diagonal tensors $\mathcal{D}_r\in\mathbb{R}_+^{m\times m\times p}$ and $\mathcal{D}_c\in\mathbb{R}_+^{n\times n\times p}$ whose diagonal entries of their first frontal slice are positive and the other frontal slices are 0.
\end{theorem}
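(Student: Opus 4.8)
The plan is to reduce the whole statement to the observation that multiplying $\mathcal{A}$ on the left (resp. right) by one of these special $f$-diagonal tensors is nothing but a uniform row (resp.\ column) rescaling of every frontal slice. First I would record the structural lemma: if $\mathcal{D}$ is $f$-diagonal with $\mathcal{D}_{::1}=D$ a diagonal matrix with positive diagonal and $\mathcal{D}_{::k}=O$ for $k\in\{2:p\}$, then $\mathsf{bcirc}(\mathcal{D})=I_p\otimes D$, so that $(\mathcal{D}\ast\mathcal{A})_{::k}=D\,\mathcal{A}_{::k}$ for every $k$. Two consequences follow immediately and will be used repeatedly: such $\mathcal{D}$ is invertible with $\mathcal{D}^{-1}$ again $f$-diagonal, nonnegative, with positive diagonal first slice and vanishing higher slices (since $\mathsf{bcirc}(\mathcal{D})^{-1}=I_p\otimes D^{-1}$); and restricting to index sets commutes with the scaling, i.e.\ for $\mathbf{I}\subset\{1:m\}$, $\mathbf{J}\subset\{1:n\}$ one has $(\mathcal{D}_r\ast\mathcal{A}\ast\mathcal{D}_c)_{\mathbf{I}\mathbf{J}:}=\mathcal{D}_r^{\mathbf{I}}\ast\mathcal{A}_{\mathbf{I}\mathbf{J}:}\ast\mathcal{D}_c^{\mathbf{J}}$, where $\mathcal{D}_r^{\mathbf{I}}$, $\mathcal{D}_c^{\mathbf{J}}$ are the $f$-diagonal tensors built from the principal submatrices $(D_r)_{\mathbf{I}\mathbf{I}}$, $(D_c)_{\mathbf{J}\mathbf{J}}$, which inherit positive diagonal first slices.

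For the forward implication, assume $\mathcal{A}=\mathcal{P}_1\ast\mathcal{A}_{\mathbf{IJ}:}\ast\mathcal{P}_2$ as in Definition~\ref{t3.1}. I would keep the same index sets $\mathbf{I},\mathbf{J}$ for $\mathcal{A}^{\mathrm{scale}}$ and set $\mathcal{P}_1'=\mathcal{D}_r\ast\mathcal{P}_1\ast(\mathcal{D}_r^{\mathbf{I}})^{-1}$ and $\mathcal{P}_2'=(\mathcal{D}_c^{\mathbf{J}})^{-1}\ast\mathcal{P}_2\ast\mathcal{D}_c$. Substituting $\mathcal{A}_{\mathbf{IJ}:}=(\mathcal{D}_r^{\mathbf{I}})^{-1}\ast(\mathcal{A}^{\mathrm{scale}})_{\mathbf{IJ}:}\ast(\mathcal{D}_c^{\mathbf{J}})^{-1}$ (the inverse of the restriction identity above) into $\mathcal{A}^{\mathrm{scale}}=\mathcal{D}_r\ast\mathcal{P}_1\ast\mathcal{A}_{\mathbf{IJ}:}\ast\mathcal{P}_2\ast\mathcal{D}_c$ yields, using associativity of $\ast$, exactly $\mathcal{A}^{\mathrm{scale}}=\mathcal{P}_1'\ast(\mathcal{A}^{\mathrm{scale}})_{\mathbf{IJ}:}\ast\mathcal{P}_2'$.

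It then remains to check the two requirements of Definition~\ref{t3.1}. Nonnegativity of $\mathcal{P}_1'$ and $\mathcal{P}_2'$ is immediate, because each is a t-product of nonnegative tensors and $\mathcal{A}\ast\mathcal{B}=\mathsf{fold}(\mathsf{bcirc}(\mathcal{A})\,\mathsf{unfold}(\mathcal{B}))$ is a product of nonnegative matrices (the inverse factors are nonnegative by the structural lemma). For the normalization I would push the row restriction through $\mathcal{D}_r\ast\mathcal{P}_1$: slice-wise scaling gives $(\mathcal{D}_r\ast\mathcal{P}_1)_{\mathbf{I}::}=\mathcal{D}_r^{\mathbf{I}}\ast(\mathcal{P}_1)_{\mathbf{I}::}=\mathcal{D}_r^{\mathbf{I}}\ast\mathcal{I}_{r_1}=\mathcal{D}_r^{\mathbf{I}}$, hence $(\mathcal{P}_1')_{\mathbf{I}::}=\mathcal{D}_r^{\mathbf{I}}\ast(\mathcal{D}_r^{\mathbf{I}})^{-1}=\mathcal{I}_{r_1}$, and symmetrically $(\mathcal{P}_2')_{:\mathbf{J}:}=\mathcal{I}_{r_2}$. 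The converse needs no new work: since $\mathcal{A}=\mathcal{D}_r^{-1}\ast\mathcal{A}^{\mathrm{scale}}\ast\mathcal{D}_c^{-1}$ and the structural lemma guarantees that $\mathcal{D}_r^{-1},\mathcal{D}_c^{-1}$ are admissible scaling tensors of the same type, applying the forward implication to $\mathcal{A}^{\mathrm{scale}}$ with these inverses recovers co-separability of $\mathcal{A}$. The only delicate point, which I would isolate as the structural lemma, is the claim that these $f$-diagonal tensors collapse the t-product into independent slice-wise diagonal scalings and that their t-inverses remain nonnegative and of the same shape; once that is established, the remainder is bookkeeping with associativity and with the commutation of index restriction and diagonal scaling.
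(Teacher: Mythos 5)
Your proposal is correct and follows essentially the same route as the paper: the same choice of scaled factors $\mathcal{P}_1'=\mathcal{D}_r\ast\mathcal{P}_1\ast(\mathcal{D}_r^{\mathbf{I}})^{-1}$ and $\mathcal{P}_2'=(\mathcal{D}_c^{\mathbf{J}})^{-1}\ast\mathcal{P}_2\ast\mathcal{D}_c$, the same scaled core $(\mathcal{D}_r)_{\mathbf{II}:}\ast\mathcal{A}_{\mathbf{IJ}:}\ast(\mathcal{D}_c)_{\mathbf{JJ}:}$, and the same verification that the restricted slices reduce to identity tensors. You are in fact slightly more thorough than the paper, since you isolate the slice-wise scaling lemma and explicitly check nonnegativity of the new factors and the converse direction, which the paper leaves implicit.
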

\begin{proof}
Since $\mathcal{A}$ is co-($r_1, r_2$)-separable,   there exist index sets 
 $\mathbf{I}$, $\mathbf{J}$, along with  tensors $\mathcal{P}_1$ and $\mathcal{P}_2$, such that
$\mathcal{A} = \mathcal{P}_1\ast\mathcal{A}_{\mathbf{I}\mathbf{J}:}\ast\mathcal{P}_2$, where $|\mathbf{I}|=r_1$ and  $|\mathbf{J}|=r_2$. Consequently, we have 
\begin{align*}
\mathcal{A} ^{\mathrm{scale}}&=\mathcal{D} _r\ast \mathcal{A} \ast \mathcal{D} _c
\\
&=\mathcal{D} _r\ast \mathcal{P} _1\ast \mathcal{A} _{\mathbf{IJ}:}\ast \mathcal{P} _2\ast \mathcal{D} _c
\\
&=\left( \mathcal{D} _r\ast \mathcal{P} _1\ast \left( \mathcal{D} _{r}^{-1} \right) _{\mathbf{II}:} \right) \ast \left( \left( \mathcal{D} _r \right) _{\mathbf{II}:}\ast \mathcal{A} _{\mathbf{IJ}:}\ast \left( \mathcal{D} _c \right) _{\mathbf{JJ}:} \right) \ast \left( \left( \mathcal{D} _{c}^{-1} \right) _{\mathbf{JJ}:}\ast \mathcal{P} _2\ast \mathcal{D} _c \right) .
\end{align*}
Note that $\mathcal{D}_r$ and $\mathcal{D}_c$ are f-diagonal and $\left(\mathcal{P}_1\right)_{\mathbf{I}::}=\mathcal{I}_{r_1}$, $\left(\mathcal{P}_2\right)_{:\mathbf{J}:}=\mathcal{I}_{r_2}$,
$$\mathcal{A} _{\mathbf{IJ}:}^{\mathrm{scale}}=\left( \mathcal{D} _r \right) _{\mathbf{I}::}\ast \mathcal{A} \ast \left( \mathcal{D} _c \right) _{:\mathbf{J}:}=\left( \mathcal{D} _r \right) _{\mathbf{I}::}\ast \mathcal{P} _1\ast \mathcal{A} _{\mathbf{IJ}:}\ast \mathcal{P} _2\ast \left( \mathcal{D} _c \right) _{:\mathbf{J}:}=\left( \mathcal{D} _r \right) _{\mathbf{II}:}\ast \mathcal{A} _{\mathbf{IJ}:}\ast \left( \mathcal{D} _c \right) _{\mathbf{JJ}:}.$$
Denote $\mathcal{P} _{1}^{\mathrm{scale}}=\mathcal{D} _r\ast \mathcal{P} _1\ast \left( \mathcal{D} _{r}^{-1} \right) _{\mathbf{II}:}$ and $\mathcal{P} _{2}^{\mathrm{scale}}=\left( \mathcal{D} _{c}^{-1} \right) _{\mathbf{JJ}:}\ast \mathcal{P} _2\ast \mathcal{D} _c$, we have
$$\mathcal{A} ^{\mathrm{scale}}=\mathcal{P} _{1}^{\mathrm{scale}}\ast \mathcal{A} _{\mathbf{IJ}:}^{\mathrm{scale}}\ast \mathcal{P} _{2}^{\mathrm{scale}},$$
where
$$\left( \mathcal{P} _{1}^{\mathrm{scale}} \right) _{\mathbf{I}::}=\left( \mathcal{D} _r \right) _{\mathbf{I}::}\ast \mathcal{P} _1\ast \left( \mathcal{D} _{r}^{-1} \right) _{\mathbf{II}:}=\mathcal{I} _{r_1}
\quad
\left( \mathcal{P} _{2}^{\mathrm{scale}} \right) _{:\mathbf{J}:}=\left( \mathcal{D} _{c}^{-1} \right) _{\mathbf{JJ}:}\ast \mathcal{P} _2\ast \left( \mathcal{D} _c \right) _{:\mathbf{J}:}=\mathcal{I} _{r_2}.$$
Therefore $\mathcal{A} ^{\mathrm{scale}}$ is co-($r_1, r_2$)-separable.
\end{proof}

Note that a tensor can have different coseparabilities with different $\left(r_1,r_2\right)$, so it is important to find the minimal $\left(r_1,r_2\right)$ that provides the most compression of the tensor. Then we define the minimally co-$\left(r_1,r_2\right)$-separable tensor.

\begin{definition}\label{t3.4}
A tensor $\mathcal{A}\in\mathbb{R}^{m\times n\times p}_+$ is \rev{minimally} co-$(r_1,r_2)$-separable if $\mathcal{A}$ is co-$(r_1,r_2)$-separable and $\mathcal{A}$ is not co-$(r'_1,r'_2)$-separable for any $r'_1<r_1$ and $r'_2<r_2$.  
\end{definition}
\rev{\begin{remark} \label{newremark}
    For the case of minimally coseparable NMF, the solution \(A_{\mathbf{IJ}}\) is determined uniquely, modulo scaling and permutation.  Specifically, when the rank of matrix \(A\) satisfies \(\mathsf{rank}(A) = r_1 = r_2\), the solution tuple \((P_1, A_{\mathbf{IJ}}, P_2)\) is unique up to permutation and scaling, as stated in \cite[Theorem 5]{pan2021co}.
    For the minimally coseparable NTF, in scenarios where \(\left(\mathcal{P}_1\right)_{::,2:p}\) and \(\left(\mathcal{P}_2\right)_{::,2:p}\) are all zero, 
    \begin{align*}
        \mathsf{unfold}\left(\mathcal{A}\right)&=\mathsf{unfold}\left(\mathcal{P}_1\ast\mathcal{A}_{\mathbf{IJ:}}\ast\mathcal{P}_2\right)\\
        &=\mathsf{bcirc}\left(\mathcal{P}_1\right)\mathsf{bcirc}\left(\mathcal{A}_{\mathbf{IJ:}}\right)\mathsf{unfold}\left(\mathcal{P}_2\right)
        \\
        &=\left[ \begin{matrix}
	\left(\mathcal{P}_1\right) _{::1}& O&  \cdots& O\\
	O& \left(\mathcal{P}_1\right) _{::1}&  \cdots& O\\
	\vdots&    \vdots& \ddots& \vdots\\
	O& O&  \cdots& \left(\mathcal{P}_1\right) _{::1}\\
        \end{matrix} \right]\!\!
        \left[ \begin{matrix}
	\mathcal{A} _{\mathbf{IJ}1}&		\mathcal{A} _{\mathbf{IJ}p}&		\cdots&		    \mathcal{A} _{\mathbf{IJ}2}\\
	\mathcal{A} _{\mathbf{IJ}2}&		\mathcal{A} _{\mathbf{IJ}1}&		\cdots&		    \mathcal{A} _{\mathbf{IJ}3}\\
	\vdots&		\vdots&		\ddots&		\vdots\\
	\mathcal{A} _{\mathbf{IJ}p}&		\mathcal{A} _{\mathbf{IJ},p-1}&		\cdots&		\mathcal{A} _{\mathbf{IJ}1}\\
        \end{matrix} \right]\!\!
        \left[ \begin{array}{c}
	\left(\mathcal{P}_2\right) _{::1}\\
	O\\
	\vdots\\
	O\\
        \end{array} \right]
        \\
        &=\left[ \begin{array}{c}
	\left(\mathcal{P}_1\right) _{::1}\mathcal{A} _{\mathbf{IJ}1}\\
	\left(\mathcal{P}_1\right) _{::1}\mathcal{A} _{\mathbf{IJ}2}\\
	\vdots\\
	\left(\mathcal{P}_1\right) _{::1}\mathcal{A} _{\mathbf{IJ}p}\\
        \end{array} \right]\left(\mathcal{P}_2\right) _{::1},
    \end{align*}
    which means \(\mathsf{unfold}\left(\mathcal{A}\right)\) is a minimally \(r_2\) separable matrix \cite{separability} and \(\mathcal{A}_{:\mathbf{J}:}=\mathsf{fold}\left(\mathsf{unfold}\left(\mathcal{A}\right)_{:\mathbf{J}}\right)\) is unique up to scaling and permutation according to the proof of \cite[Theorem 5]{pan2021co}. Analogously, \(\mathsf{unfold\left(\mathcal{A}^\top\right)}\) is minimally \(r_1\) separable and \(\mathcal{A}_{\mathbf{I}::}\) is unique up to scaling and permutation. Then it is easy to derive the uniqueness of the core \(\mathcal{A}_{\mathbf{IJ}:}\), modulo scaling and permutation. However, for the general \(\mathcal{P}_1\) and \(\mathcal{P}_2\), the proof of the uniqueness of the core \(\mathcal{A}_{\mathbf{IJ}:}\) remains a task for future research. 
    We will now show that the solution tuple \((\mathcal{P}_1, \mathcal{A}_{\mathbf{IJ}:}, \mathcal{P}_2)\) of coseparable NTF is not  unique, even if the tensor tubalrank i.e., \(\mathsf{rank}_t(\mathcal{A})\) equals \(r_1 = r_2\). To illustrate this point, we present an example as follows.
\end{remark} }
 
\begin{example}
Let $\mathcal{A}\in\mathbb{R}_+^{3\times3\times2}$ be a minimum co-$(2,2)$-separable tensor,
\begin{align*}
    \mathcal{A} _{::1}=\left[ \begin{matrix}
	1&		2&		7\\
	3&		4&		11\\
	8&		10&		18\\
\end{matrix} \right] ,\quad \mathcal{A} _{::2}=\left[ \begin{matrix}
	5&		6&		7\\
	7&		8&		11\\
	8&		10&		18\\
\end{matrix} \right] ,
\end{align*}
and let $\mathcal{S}=\mathcal{A}_{1:2,1:2,:}$, i.e.,
\begin{align*}
\mathcal{S} _{::1}=\left[ \begin{matrix}
	1&		2\\
	3&		4\\
\end{matrix} \right] ,\quad \mathcal{S} _{::2}=\left[ \begin{matrix}
	5&		6\\
	7&		8\\
\end{matrix} \right] .
\end{align*}
For all $0\leqslant a,b\leqslant 1$, $\mathcal{P}_1$ and $\mathcal{P}_2$ can be
$$\left( \mathcal{P} _1 \right) _{::1}=\left[ \begin{matrix}
	1&		0\\
	0&		1\\
	1-a&		a\\
\end{matrix} \right] ,\quad \left( \mathcal{P} _1 \right) _{::2}=\left[ \begin{matrix}
	0&		0\\
	0&		0\\
	a&		1-a\\
\end{matrix} \right] ,\quad $$
$$\left( \mathcal{P} _2 \right) _{::1}=\left[ \begin{matrix}
	1&		0&		1-b\\
	0&		1&		b\\
\end{matrix} \right] ,\quad \left( \mathcal{P} _2 \right) _{::2}=\left[ \begin{matrix}
	0&		0&		b\\
	0&		0&		1-b\\
\end{matrix} \right]. $$
\end{example}

\begin{theorem}{\rm{(Relationship between Coseparable NTF and t-CUR decomposition)}}\label{t3.5}
	\begin{enumerate}[label=(\roman*),leftmargin=.5in]
    \item \label{thm:coNTF2TCUR}
 Any co-$(r_1,r_2)$-separable tensor $\mathcal{A}\in\mathbb{R}^{m\times n\times p}_+$ admits an exact t-CUR decomposition.
\item \label{thm:TCUR2coNTF}
Let tensor $\mathcal{A}\in\mathbb{R}^{m\times n\times p}_+$ admits a t-CUR decomposition  $\mathcal{A}=\mathcal{A}_{:\mathbf{J}:}\ast\mathcal{A}_{\mathbf{IJ}:}^\dag\ast\mathcal{A}_{\mathbf{I}::}$ with $|\mathbf{I}|=r_1$, $|\mathbf{J}|=r_2$. If tensor $\mathcal{A} _{\mathbf{I}^{\complement}\mathbf{J}:}\ast\mathcal{A}_{\mathbf{IJ}:}^\dag$ and $\mathcal{A}_{\mathbf{IJ}:}^\dag \ast \mathcal{A} _{\mathbf{I}\mathbf{J}^{\complement}:}$ are both nonnegative, then $\mathcal{A}$ is a co-$(r_1,r_2)$-separable tensor with core $\mathcal{A}_{\mathbf{I}\mathbf{J}:}$, where $\mathbf{I}^{\complement}$ and $\mathbf{J}^{\complement}$ are the complement sets of $\mathbf{I}$ and $\mathbf{J}$ respectively.
\end{enumerate}
\end{theorem}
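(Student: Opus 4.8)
The plan is to prove the two implications separately, descending to slice-wise matrix statements in the Fourier domain only where rank bookkeeping is needed and otherwise manipulating the t-product directly.

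For part \ref{thm:coNTF2TCUR}, I would deduce the exact t-CUR decomposition from \Cref{t2.10} by verifying its two hypotheses for the index sets $\mathbf{I},\mathbf{J}$ furnished by coseparability. Writing $\mathcal{A}=\mathcal{P}_1\ast\mathcal{A}_{\mathbf{IJ}:}\ast\mathcal{P}_2$ and passing to the block-diagonal form \eqref{e2.4}, each frontal slice factors as $\widehat{A}_k=(\widehat{P_1})_k(\widehat{A}_k)_{\mathbf{IJ}}(\widehat{P_2})_k$, where I use that the mode-$3$ FFT commutes with selecting the rows $\mathbf{I}$ and columns $\mathbf{J}$ to identify the middle factor with the submatrix $(\widehat{A}_k)_{\mathbf{IJ}}$; hence $\mathsf{rank}(\widehat{A}_k)\le\mathsf{rank}\bigl((\widehat{A}_k)_{\mathbf{IJ}}\bigr)$. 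Conversely $(\widehat{A}_k)_{\mathbf{IJ}}$ is a submatrix of $\widehat{A}_k$ and cannot have larger rank, so the two bounds force $\mathsf{rank}_m(\mathcal{A}_{\mathbf{IJ}:})=\mathsf{rank}_m(\mathcal{A})$. Since $(\widehat{A}_k)_{\mathbf{IJ}}$ is $r_1\times r_2$, this equality also gives $\mathsf{rank}_t(\mathcal{A})=\max_k\mathsf{rank}(\widehat{A}_k)\le\min(r_1,r_2)\le\min(|\mathbf{I}|,|\mathbf{J}|)$, so the cardinality condition of \Cref{t2.10} holds and we obtain $\mathcal{A}=\mathcal{A}_{:\mathbf{J}:}\ast\mathcal{A}_{\mathbf{IJ}:}^{\dag}\ast\mathcal{A}_{\mathbf{I}::}$.

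For part \ref{thm:TCUR2coNTF}, the idea is to read the two coseparability factors off the t-CUR identity. After a permutation moving $\mathbf{I}$ to the top and $\mathbf{J}$ to the left, I would define
\begin{align*}
(\mathcal{P}_1)_{\mathbf{I}::}&=\mathcal{I}_{r_1}, & (\mathcal{P}_1)_{\mathbf{I}^{\complement}::}&=\mathcal{A}_{\mathbf{I}^{\complement}\mathbf{J}:}\ast\mathcal{A}_{\mathbf{IJ}:}^{\dag},\\
(\mathcal{P}_2)_{:\mathbf{J}:}&=\mathcal{I}_{r_2}, & (\mathcal{P}_2)_{:\mathbf{J}^{\complement}:}&=\mathcal{A}_{\mathbf{IJ}:}^{\dag}\ast\mathcal{A}_{\mathbf{I}\mathbf{J}^{\complement}:}.
\end{align*}
These meet the normalization requirements of \Cref{t3.1} automatically, and their nonnegativity is exactly the pair of hypotheses of the theorem. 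It then remains to check $\mathcal{A}=\mathcal{P}_1\ast\mathcal{A}_{\mathbf{IJ}:}\ast\mathcal{P}_2$. Expanding this product into its four blocks and applying the Moore--Penrose axiom $\mathcal{A}_{\mathbf{IJ}:}\ast\mathcal{A}_{\mathbf{IJ}:}^{\dag}\ast\mathcal{A}_{\mathbf{IJ}:}=\mathcal{A}_{\mathbf{IJ}:}$ from \Cref{t2.5} reduces them to $\mathcal{A}_{\mathbf{IJ}:}$, $\mathcal{A}_{\mathbf{IJ}:}\ast\mathcal{A}_{\mathbf{IJ}:}^{\dag}\ast\mathcal{A}_{\mathbf{I}\mathbf{J}^{\complement}:}$, $\mathcal{A}_{\mathbf{I}^{\complement}\mathbf{J}:}\ast\mathcal{A}_{\mathbf{IJ}:}^{\dag}\ast\mathcal{A}_{\mathbf{IJ}:}$, and $\mathcal{A}_{\mathbf{I}^{\complement}\mathbf{J}:}\ast\mathcal{A}_{\mathbf{IJ}:}^{\dag}\ast\mathcal{A}_{\mathbf{I}\mathbf{J}^{\complement}:}$. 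The crux is to recognize that these are exactly the four blocks produced by expanding the assumed identity $\mathcal{A}=\mathcal{A}_{:\mathbf{J}:}\ast\mathcal{A}_{\mathbf{IJ}:}^{\dag}\ast\mathcal{A}_{\mathbf{I}::}$, in which $\mathcal{A}_{:\mathbf{J}:}$ stacks $\mathcal{A}_{\mathbf{IJ}:}$ over $\mathcal{A}_{\mathbf{I}^{\complement}\mathbf{J}:}$ and $\mathcal{A}_{\mathbf{I}::}$ places $\mathcal{A}_{\mathbf{IJ}:}$ beside $\mathcal{A}_{\mathbf{I}\mathbf{J}^{\complement}:}$; matching blocks with $\mathcal{A}$ identifies the three nontrivial blocks with $\mathcal{A}_{\mathbf{I}\mathbf{J}^{\complement}:}$, $\mathcal{A}_{\mathbf{I}^{\complement}\mathbf{J}:}$, and $\mathcal{A}_{\mathbf{I}^{\complement}\mathbf{J}^{\complement}:}$, so $\mathcal{A}=\mathcal{P}_1\ast\mathcal{A}_{\mathbf{IJ}:}\ast\mathcal{P}_2$ and $\mathcal{A}$ is co-$(r_1,r_2)$-separable with core $\mathcal{A}_{\mathbf{IJ}:}$.

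I expect the main obstacle to be conceptual bookkeeping rather than a hard estimate. In part \ref{thm:TCUR2coNTF} one must be careful that the three off-diagonal and corner blocks are \emph{not} consequences of the pseudoinverse axioms alone: the top-left block is automatic, but the reconstruction of $\mathcal{A}_{\mathbf{I}\mathbf{J}^{\complement}:}$, $\mathcal{A}_{\mathbf{I}^{\complement}\mathbf{J}:}$, and especially the corner $\mathcal{A}_{\mathbf{I}^{\complement}\mathbf{J}^{\complement}:}$ uses the full strength of the assumed \emph{exact} t-CUR identity, so the argument genuinely leans on that hypothesis in tandem with the nonnegativity conditions. Handling the permutation that interleaves $\mathbf{I},\mathbf{J}$ with their complements, and confirming in part \ref{thm:coNTF2TCUR} that slice selection commutes with the mode-$3$ FFT and that $\mathsf{rank}_t(\mathcal{A})=\max_k\mathsf{rank}(\widehat{A}_k)$ (both immediate from \Cref{t2.9} and the t-SVD), are the remaining routine points.
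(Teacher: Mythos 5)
Your proposal is correct, and the two halves relate to the paper's proof differently. For part \ref{thm:TCUR2coNTF} you follow essentially the same route as the paper: the paper expands the t-CUR identity $\mathcal{A}=\mathcal{A}_{:\mathbf{J}:}\ast\mathcal{A}_{\mathbf{IJ}:}^{\dag}\ast\mathcal{A}_{\mathbf{I}::}$ into the same four block identities, sets $\mathcal{M}=\mathcal{A}_{\mathbf{I}^{\complement}\mathbf{J}:}\ast\mathcal{A}_{\mathbf{IJ}:}^{\dag}$ and $\mathcal{H}=\mathcal{A}_{\mathbf{IJ}:}^{\dag}\ast\mathcal{A}_{\mathbf{I}\mathbf{J}^{\complement}:}$, and concludes via characterization (ii) of \Cref{t3.2}; you build $\mathcal{P}_1,\mathcal{P}_2$ directly instead of passing through \Cref{t3.2}, but the content, including your observation that the corner block needs the exact t-CUR hypothesis and not just the pseudoinverse axioms, is identical. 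For part \ref{thm:coNTF2TCUR} you take a genuinely different route: the paper never invokes \Cref{t2.10}; it starts from the block form of \Cref{t3.2}(ii), inserts $\mathcal{S}^{\dag}$ with $\mathcal{S}\ast\mathcal{S}^{\dag}\ast\mathcal{S}=\mathcal{S}$, and reassembles the factors into $\mathcal{A}_{:\mathbf{J}:}\ast\mathcal{A}_{\mathbf{IJ}:}^{\dag}\ast\mathcal{A}_{\mathbf{I}::}$ by pure t-product algebra. You instead verify the two hypotheses of \Cref{t2.10} (that $\mathsf{rank}_m(\mathcal{A}_{\mathbf{IJ}:})=\mathsf{rank}_m(\mathcal{A})$, via the sandwich bound on each Fourier slice, and that $|\mathbf{I}|,|\mathbf{J}|\geqslant\mathsf{rank}_t(\mathcal{A})$) and then cite that theorem. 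Your version is valid and has the virtue of exhibiting the rank equality $\mathsf{rank}_m(\mathcal{A}_{\mathbf{IJ}:})=\mathsf{rank}_m(\mathcal{A})$ explicitly, which the paper's direct computation leaves implicit; the paper's version is self-contained and does not depend on the correctness of the slice-selection/FFT commutation bookkeeping. Both arguments are sound.
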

\begin{proof}
\begin{enumerate}[label=(\roman*),leftmargin=.5in]
    \item  From Proposition \ref{t3.2} (ii), $\mathcal{A}$ can be written as
$$\mathcal{A} =\varPi _1\ast \left[ \begin{matrix}
	\mathcal{S}&		\mathcal{S} \ast \mathcal{H}\\
	\mathcal{M} \ast \mathcal{S}&		\mathcal{M} \ast \mathcal{S} \ast \mathcal{H}\\
\end{matrix} \right] \ast \varPi _2.$$
Notice that ${\varPi }_1$ moves horizontal slices with indices $\mathbf{I}$ to the first $r_1$ horizontal slices, and ${\varPi }_2$ moves lateral slices with indices $\mathbf{J}$ to the first $r_2$ lateral slices.  Using \Cref{t2.5}, we have
\rev{
\begin{align*}
\mathcal{A} =&\varPi _1\ast \left[ \begin{matrix}
	\mathcal{S}&		\mathcal{S} \ast \mathcal{H}\\
	\mathcal{M} \ast \mathcal{S}&		\mathcal{M} \ast \mathcal{S} \ast \mathcal{H}\\
\end{matrix} \right] \ast \varPi _2
\\
=&\varPi _1\ast \left[ \begin{array}{c}
	\mathcal{I}_{r_1}\\
	\mathcal{M}\\
\end{array} \right] \ast \mathcal{S}\ast  \left[ \begin{matrix}
	\mathcal{I}_{r_2}&		\mathcal{H}\\
\end{matrix} \right] \ast \varPi _2
\\
=&\varPi _1\ast \left[ \begin{array}{c}
	\mathcal{I}_{r_1}\\
	\mathcal{M}\\
\end{array} \right] \ast \mathcal{S}\ast \varPi _2\ast \varPi _2^\top\ast \mathcal{S} ^{\dagger}\ast \varPi _1^\top\ast \varPi _1\ast \mathcal{S}\ast \left[ \begin{matrix}
	\mathcal{I}_{r_2}&		\mathcal{H}\\
\end{matrix} \right] \ast \varPi _2
\\
=&\varPi _1\ast \left[ \begin{array}{c}
	\mathcal{S}\\
	\mathcal{M} \ast \mathcal{S}\\
\end{array} \right]\ast \varPi _2\ast \varPi _2^\top \ast \mathcal{S} ^{\dagger}\ast \varPi _1^\top\ast \varPi _1\ast \left[ \begin{matrix}
	\mathcal{S}&		\mathcal{S} \ast \mathcal{H}\\
\end{matrix} \right] \ast \varPi _2
\\
=&\mathcal{A} _{:\mathbf{J}:}\ast \varPi _2^\top \ast \mathcal{S} ^{\dagger}\ast \varPi _1^\top\ast \mathcal{A} _{\mathbf{I}::}=\mathcal{A} _{:\mathbf{J}:}\ast \mathcal{A} _{\mathbf{IJ}:}^{\dagger}\ast \mathcal{A} _{\mathbf{I}::},
\end{align*}}
which is a t-CUR decomposition of $\mathcal{A}$.
\item Since $\mathcal{A}$ admits $\mathcal{A}=\mathcal{A}_{:\mathbf{J}:}\ast\mathcal{A}_{\mathbf{IJ}:}^\dag\ast\mathcal{A}_{\mathbf{I}::}$, we can adjust the index sets $\mathbf{I}$ and $\mathbf{J}$ such that
$$\mathcal{A} _{\mathbf{IJ}:}=\mathcal{A} _{\mathbf{IJ}:}\ast \mathcal{A}_{\mathbf{IJ}:}^\dag \ast \mathcal{A} _{\mathbf{IJ}:},\quad \mathcal{A} _{\mathbf{I}\mathbf{J}^{\complement}:}=\mathcal{A} _{\mathbf{IJ}:}\ast \mathcal{A}_{\mathbf{IJ}:}^\dag \ast \mathcal{A} _{\mathbf{I}\mathbf{J}^{\complement}:},$$
$$\mathcal{A} _{\mathbf{I}^{\complement}\mathbf{J}:}=\mathcal{A} _{\mathbf{I}^{\complement}\mathbf{J}:}\ast \mathcal{A}_{\mathbf{IJ}:}^\dag \ast \mathcal{A} _{\mathbf{IJ}:},\quad \mathcal{A} _{\mathbf{I}^{\complement}\mathbf{J}^{\complement}:}=\mathcal{A} _{\mathbf{I}^{\complement}\mathbf{J}:}\ast \mathcal{A}_{\mathbf{IJ}:}^\dag \ast \mathcal{A} _{\mathbf{I}\mathbf{J}^{\complement}:}.$$
Let
$$\mathcal{S} =\mathcal{A} _{\mathbf{IJ}:},\quad \mathcal{M} =\mathcal{A} _{\mathbf{I}^{\complement}\mathbf{J}:}\ast \mathcal{A}_{\mathbf{IJ}:}^\dag ,\quad \mathcal{H} =\mathcal{A}_{\mathbf{IJ}:}^\dag \ast \mathcal{A} _{\mathbf{I}\mathbf{J}^{\complement}:},$$
then
$$\mathcal{A} =\varPi _1\ast \left[ \begin{matrix}
	\mathcal{A} _{\mathbf{IJ}:}&		\mathcal{A} _{\mathbf{I}\mathbf{J}^{\complement}:}\\
	\mathcal{A} _{\mathbf{I}^{\complement}\mathbf{J}:}&		\mathcal{A} _{\mathbf{I}^{\complement}\mathbf{J}^{\complement}:}\\
\end{matrix} \right] \ast \varPi _2=\varPi _1\ast \left[ \begin{matrix}
	\mathcal{S}&		\mathcal{S} \ast \mathcal{H}\\
	\mathcal{M} \ast \mathcal{S}&		\mathcal{M} \ast \mathcal{S} \ast \mathcal{H}\\
\end{matrix} \right] \ast \varPi _2.$$
Therefore, $\mathcal{A}$ is co-$(r_1,r_2)$-separable.
\end{enumerate}
\end{proof}

The statement \ref{thm:TCUR2coNTF} of Theorem \ref{t3.5} indicates that not every t-CUR decomposition yields a minimum coseparability, which means the reverse of statement \ref{thm:coNTF2TCUR} does not hold. To demonstrate this, we provide the following example.
\begin{example}
Let \(\mathcal{A}\in\mathbb{R}_+^{3\times3\times2}\) be a minimum co-\((2,2)\)-separable tensor, defined as:
\begin{align*}
    \mathcal{A}_{::1} = \begin{bmatrix}
	1 & 1 & \frac{3}{2} \\
	0 & 1 & \frac{3}{2} \\
	1 & 1 & \frac{3}{2} \\
\end{bmatrix}, \quad \mathcal{A}_{::2} = \begin{bmatrix}
	0 & 0 & \frac{1}{2} \\
	0 & 0 & \frac{1}{2} \\
	0 & 0 & \frac{1}{2} \\
\end{bmatrix}.
\end{align*}
Consider index sets \(\mathbf{I} = \{1, 2\}\), \(\mathbf{J} = \{1, 2\}\), and \(\tilde{\mathbf{I}} = \{1, 2\}\), \(\tilde{\mathbf{J}} = \{1, 3\}\). With these sets, \(\mathcal{A}\) admits two distinct t-CUR decompositions: \(\mathcal{A} = \mathcal{A}_{:\mathbf{J}:} \ast \mathcal{A}_{\mathbf{IJ}:}^\dag \ast \mathcal{A}_{\mathbf{I}::}\) and \(\mathcal{A} = \mathcal{A}_{:\tilde{\mathbf{J}}:} \ast \mathcal{A}_{\tilde{\mathbf{I}}\tilde{\mathbf{J}}:}^\dag \ast \mathcal{A}_{\tilde{\mathbf{I}}::}\).

For \(\mathcal{A}_{\mathbf{IJ}:}\), we have:
\begin{align*}
(\mathcal{P}_1)_{::1} = \begin{bmatrix}
	1 & 0 \\
	0 & 1 \\
	1 & 0 \\
\end{bmatrix}, \quad (\mathcal{P}_1)_{::2} = \begin{bmatrix}
	0 & 0 \\
	0 & 0 \\
	0 & 0 \\
\end{bmatrix}, \\
(\mathcal{P}_2)_{::1} = \begin{bmatrix}
	1 & 0 & 0 \\
	0 & 1 & \frac{3}{2} \\
\end{bmatrix}, \quad (\mathcal{P}_2)_{::2} = \begin{bmatrix}
	0 & 0 & 0 \\
	0 & 0 & \frac{1}{2} \\
\end{bmatrix}
\end{align*}
such that \(\mathcal{A} = \mathcal{P}_1 \ast \mathcal{A}_{\mathbf{I}\mathbf{J}:} \ast \mathcal{P}_2\). However, for \(\mathcal{A}_{\tilde{\mathbf{I}}\tilde{\mathbf{J}}:}\), it is not possible to identify nonnegative tensors  \(\mathcal{P}_1\) and \(\mathcal{P}_2\) that can ensure a coseparable NTF.
\end{example}

\section{Coseparable core selection algorithms}\label{sec:index-selection}
\subsection{Alternating CoS-NTF index selection}
  \Cref{t3.2} indicates that finding a minimal core $\mathcal{A}_{\mathbf{IJ}:}$ is equivalent to finding $\mathcal{X}\in \mathbb{R}^{m\times m\times p}_+$ and $\mathcal{Y} \in \mathbb{R}^{n\times n\times p}_+$ such that \eqref{e3.5} is satisfied and the number of nonzero lateral slice of $\mathcal{X}$ and nonzero horizontal slice of $\mathcal{Y}$ are minimized, which is stated as the following theorem.
\begin{theorem}\label{t3.6}
Let $\mathcal{A}$ be \rev{minimally} co-$(r_1, r_2)$-separable, and let $(\mathcal{X}^\mathrm{opt},\mathcal{Y}^\mathrm{opt} )$ be an optimal solution to the following optimization problem: 
\begin{equation}\label{e3.6}
\begin{aligned}
\min_{\mathcal{X} \in \mathbb{R} _{+}^{m\times m\times p}\atop
\mathcal{Y} \in \mathbb{R} _{+}^{n\times n\times p}}\ &\left\| \mathcal{X} \right\| _{\mathrm{lat},0}+\left\| \mathcal{Y} \right\| _{\mathrm{hor},0}
\\
\mathrm{s.t.}\ &\mathcal{A} =\mathcal{X} \ast \mathcal{A} ,\ \mathcal{A} =\mathcal{A} \ast \mathcal{Y} ,
\end{aligned}
\end{equation}
where $\left\| \cdot\right\|_{\mathrm{lat},0}$ and $\left\| \cdot\right\|_{\mathrm{hor},0}$ denote the number of nonzero lateral and horizontal slices, respectively. Let $\mathbf{I}$, $\mathbf{J}$ be the index sets of the nonzero lateral slices of $\mathcal{X}^\mathrm{opt}$ and the nonzero horizontal slices of $\mathcal{Y}^\mathrm{opt}$. Then $\left|\mathbf{I}\right|=r_1$, $\left|\mathbf{J}\right|=r_2$ and $\mathcal{A}_{\mathbf{IJ}:}$ is the minimal core of $\mathcal{A}$.
\end{theorem}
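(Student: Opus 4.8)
The plan is to exploit the fact that the two constraints in \eqref{e3.6} are completely decoupled: $\mathcal{A}=\mathcal{X}\ast\mathcal{A}$ involves only $\mathcal{X}$, while $\mathcal{A}=\mathcal{A}\ast\mathcal{Y}$ involves only $\mathcal{Y}$. Hence the minimum of the sum splits as the sum of two independent minima, and it suffices to analyze the lateral-support problem for $\mathcal{X}$; the horizontal-support problem for $\mathcal{Y}$ then follows by applying the same argument to $\mathcal{A}^{\top}$. First I would translate the support of $\mathcal{X}$ into a row-selection factorization. Writing the constraint via $\mathsf{unfold}(\mathcal{A})=\mathsf{bcirc}(\mathcal{X})\,\mathsf{unfold}(\mathcal{A})$ as in \Cref{t2.1}, one checks that a vanishing lateral slice $\mathcal{X}_{:j:}=\mathcal{O}$ forces the $j$-th column of every frontal block of $\mathsf{bcirc}(\mathcal{X})$ to vanish, so only the horizontal slices of $\mathcal{A}$ indexed by the lateral support $\mathbf{I}$ of $\mathcal{X}$ contribute. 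Making this precise, if $\mathcal{E}_{\mathbf{I}}$ denotes the selection tensor whose first frontal slice is the $\lvert\mathbf{I}\rvert\times m$ row-selection matrix (and whose remaining slices are zero), then $\mathcal{E}_{\mathbf{I}}\ast\mathcal{A}=\mathcal{A}_{\mathbf{I}::}$ and $\mathcal{X}=\mathcal{X}_{:\mathbf{I}:}\ast\mathcal{E}_{\mathbf{I}}$. Setting $\mathcal{P}_1:=\mathcal{X}_{:\mathbf{I}:}\in\mathbb{R}_+^{m\times\lvert\mathbf{I}\rvert\times p}$, the constraint becomes $\mathcal{A}=\mathcal{P}_1\ast\mathcal{A}_{\mathbf{I}::}$, and symmetrically $\mathcal{A}=\mathcal{A}_{:\mathbf{J}:}\ast\mathcal{P}_2$ with $\mathcal{P}_2:=\mathcal{Y}_{\mathbf{J}::}\geq\mathcal{O}$.

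Next I would combine the two one-sided reconstructions into a coseparable factorization. Restricting $\mathcal{A}=\mathcal{P}_1\ast\mathcal{A}_{\mathbf{I}::}$ to the lateral slices $\mathbf{J}$ gives $\mathcal{A}_{:\mathbf{J}:}=\mathcal{P}_1\ast\mathcal{A}_{\mathbf{IJ}:}$; substituting into $\mathcal{A}=\mathcal{A}_{:\mathbf{J}:}\ast\mathcal{P}_2$ yields $\mathcal{A}=\mathcal{P}_1\ast\mathcal{A}_{\mathbf{IJ}:}\ast\mathcal{P}_2$ with both factors nonnegative. It then remains to verify the normalizations $(\mathcal{P}_1)_{\mathbf{I}::}=\mathcal{I}_{\lvert\mathbf{I}\rvert}$ and $(\mathcal{P}_2)_{:\mathbf{J}:}=\mathcal{I}_{\lvert\mathbf{J}\rvert}$ required by \Cref{t3.1}. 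Restricting $\mathcal{A}=\mathcal{P}_1\ast\mathcal{A}_{\mathbf{I}::}$ to the horizontal slices $\mathbf{I}$ produces $\bigl(\mathcal{I}_{\lvert\mathbf{I}\rvert}-(\mathcal{P}_1)_{\mathbf{I}::}\bigr)\ast\mathcal{A}_{\mathbf{I}::}=\mathcal{O}$, and I would argue that at a support-minimal $\mathcal{X}$ the horizontal slices of $\mathcal{A}_{\mathbf{I}::}$ are t-linearly independent — equivalently, in the Fourier domain each $(\widehat{A}_k)_{\mathbf{I}:}$ has full row rank — which forces $(\mathcal{P}_1)_{\mathbf{I}::}=\mathcal{I}_{\lvert\mathbf{I}\rvert}$; otherwise a dependent slice could be discarded, contradicting the minimality of $\lvert\mathbf{I}\rvert$. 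The normalization of $\mathcal{P}_2$ follows symmetrically. This shows every feasible pair makes $\mathcal{A}$ co-$(\lvert\mathbf{I}\rvert,\lvert\mathbf{J}\rvert)$-separable with core $\mathcal{A}_{\mathbf{IJ}:}$.

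Finally I would close the bound from both sides. The minimal coseparable representation of \Cref{t3.2}(ii) supplies a feasible pair whose lateral and horizontal supports have sizes $r_1$ and $r_2$, so the optimal value is at most $r_1+r_2$. Conversely, since any feasible pair yields co-$(\lvert\mathbf{I}\rvert,\lvert\mathbf{J}\rvert)$-separability, while the decoupling lets the two supports be minimized independently, the minimality of \Cref{t3.4} prevents reducing either support below $r_1$ and $r_2$; combined with the upper bound this pins the two independent minima at exactly $r_1$ and $r_2$. Hence $\lvert\mathbf{I}\rvert=r_1$, $\lvert\mathbf{J}\rvert=r_2$, and $\mathcal{A}_{\mathbf{IJ}:}$ is the minimal core.

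I expect the normalization step to be the main obstacle. Unlike the matrix setting, where support-minimality of the selection matrix immediately forces linear independence of the selected rows, here ``t-linear independence'' of horizontal slices must be handled through the Fourier-domain ranks because the tube ring carries zero divisors; the delicate point is to show that support-minimality of $\mathcal{X}$ really guarantees that each $(\widehat{A}_k)_{\mathbf{I}:}$ has full row rank \emph{simultaneously} across all $k$, which is what turns the relation $\bigl(\mathcal{I}-(\mathcal{P}_1)_{\mathbf{I}::}\bigr)\ast\mathcal{A}_{\mathbf{I}::}=\mathcal{O}$ into the desired identity. A secondary point to verify carefully is the exact reading of the minimality in \Cref{t3.4}, which must be strong enough (coordinatewise, not merely jointly) to pin both independent minima.
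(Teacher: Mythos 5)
Your overall architecture (upper bound from a feasible pair supplied by Proposition \ref{t3.2}, lower bound by showing every feasible pair induces a coseparable factorization) matches the paper's, and the decoupling and the translation of vanishing lateral slices of $\mathcal{X}$ into $\mathcal{A}=\mathcal{X}_{:\mathbf{I}:}\ast\mathcal{A}_{\mathbf{I}::}$ are fine. The genuine gap is exactly the normalization step you flag, and your proposed repair does not work. You claim that support-minimality of $\mathcal{X}^{\mathrm{opt}}$ forces each $(\widehat{A}_k)_{\mathbf{I}:}$ to have full row rank via ``otherwise a dependent slice could be discarded.'' That inference fails under the nonnegativity constraint: a horizontal slice of $\mathcal{A}_{\mathbf{I}::}$ can be linearly dependent on the others without being expressible as a \emph{nonnegative} t-combination of them, so you cannot discard it while keeping $\mathcal{X}\in\mathbb{R}_+^{m\times m\times p}$ feasible. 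Hence the relation $\bigl(\mathcal{I}-(\mathcal{X}_{:\mathbf{I}:})_{\mathbf{I}::}\bigr)\ast\mathcal{A}_{\mathbf{I}::}=\mathcal{O}$ need not yield $(\mathcal{X}_{:\mathbf{I}:})_{\mathbf{I}::}=\mathcal{I}$, and indeed nothing in the optimization forces $\mathcal{X}^{\mathrm{opt}}_{\mathbf{I}\mathbf{I}:}$ to be the identity.

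The fix is much simpler than what you attempt, and it is what the paper does: you do not need $\mathcal{P}_1=\mathcal{X}^{\mathrm{opt}}_{:\mathbf{I}:}$ itself to carry an identity block, because you are free to \emph{replace} its $\mathbf{I}$-rows by the identity. From the constraint you already have $\mathcal{A}_{\mathbf{I}^{\complement}::}=\mathcal{X}^{\mathrm{opt}}_{\mathbf{I}^{\complement}\mathbf{I}:}\ast\mathcal{A}_{\mathbf{I}::}$, while the rows indexed by $\mathbf{I}$ are reproduced trivially by $\mathcal{I}_{|\mathbf{I}|}\ast\mathcal{A}_{\mathbf{I}::}=\mathcal{A}_{\mathbf{I}::}$; so (after the permutation putting $\mathbf{I}$ first) the tensor $\mathcal{P}_1=\bigl[\begin{smallmatrix}\mathcal{I}_{|\mathbf{I}|}\\ \mathcal{X}^{\mathrm{opt}}_{21}\end{smallmatrix}\bigr]$ is nonnegative, satisfies $\mathcal{A}=\mathcal{P}_1\ast\mathcal{A}_{\mathbf{I}::}$, and has the required normalization by construction. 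Combining this with the analogous $\bigl[\mathcal{I}_{|\mathbf{J}|}\ \ \mathcal{Y}^{\mathrm{opt}}_{12}\bigr]$ and using $\mathcal{A}_{\mathbf{I}\mathbf{J}^{\complement}:}=\mathcal{A}_{\mathbf{I}\mathbf{J}:}\ast\mathcal{Y}^{\mathrm{opt}}_{12}$ gives $\mathcal{A}=\mathcal{P}_1\ast\mathcal{A}_{\mathbf{I}\mathbf{J}:}\ast\mathcal{P}_2$ with no rank or t-linear-independence hypothesis at all. With that substitution your argument closes; the remaining caveat you raise about the coordinatewise reading of Definition \ref{t3.4} is shared by the paper's own proof and is not a defect specific to your approach.
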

\begin{proof}
Since $\mathcal{A}$ is \rev{minimally} co-$(r_1, r_2)$-separable, \Cref{t3.5} implies the existence of nonnegative tensors $\mathcal{X}$ and $\mathcal{Y}$ such that $\mathcal{A} = \mathcal{X}\ast\mathcal{A}$ and $\mathcal{A} = \mathcal{A}\ast\mathcal{Y}$, where $\mathcal{X}$ and $\mathcal{Y}$ satisfy \eqref{e3.2}. By the optimality of $(\mathcal{X}^\mathrm{opt},\mathcal{Y}^\mathrm{opt} )$, we have $\left|\mathbf{I}\right|=\left\| \mathcal{X}^\mathrm{opt}\right\|_{\mathrm{lat},0}\leqslant \left\| \mathcal{X}\right\|_{\mathrm{lat},0}=r_1$ and $\left|\mathbf{J}\right|=\left\| \mathcal{Y}^\mathrm{opt}\right\|_{\mathrm{hor},0}\leqslant \left\| \mathcal{Y}\right\|_{\mathrm{hor},0}=r_2$.

On the other hand, we assume that $\mathcal{X}^\mathrm{opt}$ and $\mathcal{Y}^\mathrm{opt}$ are of the forms 
$$\mathcal{X}^\mathrm{opt}=\left[ \begin{matrix}
	\mathcal{X} _{11}^\mathrm{opt}&		\mathcal{O}\\
	\mathcal{X} _{21}^\mathrm{opt}&		\mathcal{O}\\
\end{matrix} \right]\quad \mathcal{Y}^\mathrm{opt}=\left[ \begin{matrix}
	\mathcal{Y} _{11}^\mathrm{opt}&		\mathcal{Y} _{12}^\mathrm{opt}\\
	\mathcal{O}&		\mathcal{O}\\
\end{matrix} \right], $$
where $\mathcal{X} _{11}^\mathrm{opt}\in \mathbb{R}^{\left|\mathbf{I}\right| \times \left|\mathbf{I}\right|\times p}_+ $, $\mathcal{Y} _{11}^\mathrm{opt}\in \mathbb{R}^{\left|\mathbf{J}\right| \times \left|\mathbf{J}\right|\times p}_+ $. According to the constraint,
$$\mathcal{A} =\left[ \begin{matrix}
	\mathcal{A} _{11}&		\mathcal{A} _{12}\\
	\mathcal{A} _{21}&		\mathcal{A} _{22}\\
\end{matrix} \right] =\mathcal{X} ^\mathrm{opt}\ast \mathcal{A} =\left[ \begin{matrix}
	\mathcal{X} _{11}^\mathrm{opt}\ast \mathcal{A} _{11}&		\mathcal{X} _{11}^\mathrm{opt}\ast \mathcal{A} _{12}\\
	\mathcal{X} _{21}^\mathrm{opt}\ast \mathcal{A} _{11}&		\mathcal{X} _{21}^\mathrm{opt}\ast \mathcal{A} _{12}\\
\end{matrix} \right] =\mathcal{A} \ast \mathcal{Y} ^\mathrm{opt}=\left[ \begin{matrix}
	\mathcal{A} _{11}\ast \mathcal{Y} _{11}^\mathrm{opt}&		\mathcal{A} _{11}\ast \mathcal{Y} _{12}^\mathrm{opt}\\
	\mathcal{A} _{21}\ast \mathcal{Y} _{11}^\mathrm{opt}&		\mathcal{A} _{21}\ast \mathcal{Y} _{12}^\mathrm{opt}\\
\end{matrix} \right] ,$$
then
$$\mathcal{A} =\left[ \begin{matrix}
	\mathcal{A} _{11}&		\mathcal{X} _{11}^\mathrm{opt}\ast \mathcal{A} _{12}\\
	\mathcal{X} _{21}^\mathrm{opt}\ast \mathcal{A} _{11}&		\mathcal{X} _{21}^\mathrm{opt}\ast \mathcal{A} _{11}\ast \mathcal{Y} _{12}^\mathrm{opt}\\
\end{matrix} \right] =\left[ \begin{array}{c}
	\mathcal{I} _{\left| \mathbf{I} \right|}\\
	\mathcal{X} _{21}^\mathrm{opt}\\
\end{array} \right] \ast \mathcal{A} _{11}\ast \left[ \begin{matrix}
	\mathcal{I} _{\left| \mathbf{J} \right|}&		\mathcal{Y} _{12}^\mathrm{opt}\\
\end{matrix} \right] ,$$
which means $\mathcal{A}$ is co-($\left| \mathbf{I} \right|,\left| \mathbf{J} \right|$) separable and $\left|\mathbf{I}\right|\geqslant r_1$, $\left|\mathbf{J}\right|\geqslant r_2$.

Therefore, $\left|\mathbf{I}\right|=r_1$, $\left|\mathbf{J}\right|=r_2$ and $\mathcal{A}_{\mathbf{IJ}:}$ is the minimal core.
\end{proof}

The model \eqref{e3.6} can be separated into two independent problems as follows.
\begin{equation}\label{e3.7}
\begin{aligned}
\min_{\mathcal{X} \in \mathbb{R} _{+}^{m\times m\times p}}\ &\left\| \mathcal{X} \right\| _{\mathrm{lat},0}\quad \mathrm{s.t.}\ \mathcal{A}=\mathcal{X}\ast \mathcal{A} ,
\\
\min_{\mathcal{Y} \in \mathbb{R} _{+}^{n\times n\times p}}\ &\left\| \mathcal{Y} \right\| _{\mathrm{hor},0}\quad \mathrm{s.t.}\ \mathcal{A}=\mathcal{A}\ast \mathcal{Y} .
\end{aligned}
\end{equation}
 \Cref{t3.6} confirms that the optimal solution \((\mathcal{X}^\mathrm{opt}, \mathcal{Y}^\mathrm{opt})\) adheres to the form specified in \eqref{e3.2}. Specifically, \eqref{e3.2} suggests that the nonzero slices of \(\mathcal{X}^\mathrm{opt}\) and \(\mathcal{Y}^\mathrm{opt}\) are identified by the identity tensor blocks \(\mathcal{I}_{r_1}\) and \(\mathcal{I}_{r_2}\). These blocks are characterized by having all entries as zero, except in the first frontal slice. This results in \(\left\| \mathcal{X}^\mathrm{opt} \right\|_{\mathrm{lat},0} = \left\| \mathcal{X}^\mathrm{opt}_{::1} \right\|_{\mathrm{col},0} = r_1\) and \(\left\| \mathcal{Y}^\mathrm{opt} \right\|_{\mathrm{hor},0} = \left\| \mathcal{Y}^\mathrm{opt}_{::1} \right\|_{\mathrm{row},0} = r_2\). Note that our goal is finding which slices of $\mathcal{A}$ are more crucial and the model \eqref{e3.7} aims to determine index sets $\mathbf{I,J}$ to form the coseparable core $\mathcal{A}_{\mathbf{IJ}:}$. This understanding prompts us to focus primarily on the first frontal slice and devise a more straightforward solution. 
\begin{align*}
\min_{\mathcal{X} \in \mathbb{R} _{+}^{m\times m\times p}}\left\| \mathcal{X}_{::1} \right\| _{\mathrm{col},0}\quad &\mathrm{s.t.}\ \mathcal{A}=\mathcal{X}\ast \mathcal{A} ,
\\
\min_{\mathcal{Y}\in\mathbb{R}_{+}^{n\times n\times p}}\left\|\mathcal{Y}_{::1}\right\|_{\mathrm{row},0}\quad &\mathrm{s.t.}\ \mathcal{A}=\mathcal{A}\ast \mathcal{Y}.
\end{align*}

Subsequently, we will address the second challenge, with a similar approach applicable to the first. Consider a tensor $\mathcal{Y}'$ such that $\mathcal{Y}'_{::1} =\Pi _{2}^{\top}\left[ \begin{matrix}
	I_{r_2}&		H'\\
	O&		O\\
\end{matrix} \right] \Pi _2$ and $\mathcal{Y}'_{::,2:p}=O$, then $\mathcal{Y}'$ has the form of $\varPi _{2}^{\top}\ast \left[ \begin{matrix}
	\mathcal{I} _{r_2}&		\mathcal{H}\\
	\mathcal{O}& \mathcal{O}\\
\end{matrix} \right] \ast \varPi _2$. If $\mathcal{Y}'$ satisfies $\mathcal{A}=\mathcal{A}\ast\mathcal{Y}'$, we have 
\begin{align*}
    \mathsf{unfold}\left( \mathcal{A} \right) &=\mathsf{bcirc}\left( \mathcal{A} \right) \mathsf{unfold}\left( \mathcal{Y} ' \right) 
\\
&=\left[ \begin{matrix}
	\mathcal{A} _{::1}&		\mathcal{A} _{::p}&		\cdots&		\mathcal{A} _{::2}\\
	\mathcal{A} _{::2}&		\mathcal{A} _{::1}&		\cdots&		\mathcal{A} _{::3}\\
	\vdots&		\vdots&		\ddots&		\vdots\\
	\mathcal{A} _{::p}&		\mathcal{A} _{::,p-1}&		\cdots&		\mathcal{A} _{::1}\\
\end{matrix} \right] \left[ \begin{array}{c}
	\mathcal{Y} '_{::1}\\
	O\\
	\vdots\\
	O\\
\end{array} \right] =\left[ \begin{array}{c}
	\mathcal{A} _{::1}\mathcal{Y} '_{::1}\\
	\mathcal{A} _{::2}\mathcal{Y} '_{::1}\\
	\vdots\\
	\mathcal{A} _{::p}\mathcal{Y} '_{::1}\\
\end{array} \right] =\mathsf{unfold}\left( \mathcal{A} \right) \mathcal{Y} '_{::1}.
\end{align*}
Moreover, $\left\|\mathcal{Y}' \right\|_{\mathrm{hor},0}=\left\|\mathcal{Y}'_{::1}\right\|_{\mathrm{row},0}=r_2$, which means $\mathcal{Y}'$ is an optimal solution of \eqref{e3.7}. Therefore, we can find the index set $\mathbf{J}$ by solving the following model
\begin{equation}\label{e3.8}
\begin{aligned}
\min_{Y \in \mathbb{R} _{+}^{n\times n}} \ \left\| Y \right\| _{\mathrm{row},0}\quad \mathrm{s.t.}\ \mathsf{unfold}\left(\mathcal{A}\right)=\mathsf{unfold}\left(\mathcal{A}\right)Y.
\end{aligned}
\end{equation}

However, note that
$$\mathsf{rank}\left( \mathsf{unfold}\left( \mathcal{A} \right) \right) =\mathsf{rank}\left( \mathsf{bcirc}\left( \mathcal{A} \right) \right) =\mathsf{rank}\left( \widehat{A} \right) =\sum_{k=1}^p{\mathsf{rank}\left( \widehat{A}_k \right)}=\sum_{k=1}^p{r_k},$$
the sum of each entry of $\mathsf{rank}_m\left( \mathcal{A} \right)$, which means $\mathsf{unfold}\left(\mathcal{A}\right)$ may have full rank even if $\mathcal{A}$ doesn't have full rank. Besides, the presence of noise in $\mathcal{A}$ can lead to a full rank of $\mathsf{unfold}\left(\mathcal{A}\right)$, thereby resulting in a solution $Y = I_n$. To avoid this situation, we will consider the following problem
\begin{equation}\label{e3.9}
\begin{aligned}
\min_{Y \in \mathbb{R} _{+}^{n\times n}} \ \left\| Y \right\| _{\mathrm{row},0}\quad \mathrm{s.t.}\ \left\|\mathsf{unfold}\left(\mathcal{A}\right)-\mathsf{unfold}\left(\mathcal{A}\right)Y\right\|_{\fro}\leqslant \epsilon.
\end{aligned}
\end{equation} 

The model \eqref{e3.9} can be relaxed to the following model as demonstrated in \cite{pan2021co}.
\begin{equation}\label{e3.10}
\begin{aligned}
\min_{Y \in \Omega_Y} \ \frac{1}{2}\left\|\mathsf{unfold}\left(\mathcal{A}\right)-\mathsf{unfold}\left(\mathcal{A}\right)Y\right\|^2_{\fro}+\lambda_Y\mathsf{tr}\left( Y \right),
\end{aligned}
\end{equation} 
where $\Omega_Y=\left\{Y\in\mathbb{R}_+^{n\times n}:0\leqslant Y_{ij}\leqslant1,\ \omega_iY_{ij}\leqslant\omega_jY_{ii},\ 1\leqslant i,j\leqslant n,\ \omega_j=\left\|\mathsf{unfold}\left(\mathcal{A}\right)_{:j}\right\|_1\right\}$, $\lambda_Y$ is the regularization parameter to balance two terms. We can solve this problem and select indices by the fast
gradient method for separable-NMF (SNMF-FGM) algorithm \cite{SNMF-FGM}.

For the first problem of \eqref{e3.7}, we can analogously solve the following problem to select indices
\begin{align}\label{e3.11}
    \min_{X \in \Omega_X} \ \frac{1}{2}\left\|\mathsf{unfold}\left(\mathcal{A}^\top\right)-\mathsf{unfold}\left(\mathcal{A}^\top\right)X^\top\right\|^2_{\fro}+\lambda_X\mathsf{tr}\left( X \right),
\end{align}
where $\Omega_X=\left\{X\in\mathbb{R}_+^{m\times m}:0\leqslant X_{ij}\leqslant1,\ \omega_iX_{ji}\leqslant\omega_jX_{ii},\ 1\leqslant i,j\leqslant n,\ \omega_j=\left\|\mathsf{unfold}\left(\mathcal{A}^\top\right)_{:j}\right\|_1\right\}$.

So we use the SNMF-FGM algorithm to alternately solve two optimization problems of $X$ and $Y$ and find $\mathbf{I}$ and $\mathbf{J}$ iteratively as outlined in \Cref{a2}. Besides, notice that we derive \cref{e3.8} by letting the tensor \(\mathcal{Y}\) in \cref{e3.7} with only the first frontal slice nonzero, and we have already shown in \cref{newremark} that the solution is unique in this case, which guarantees a unique solution of \cref{a2}. 
\begin{algorithm}
\caption{Alternating CoS-NTF index selection}
\begin{algorithmic}\label{a2}
\STATE $\mathbf{I}=\mathsf{SNMF}$-$\mathsf{FGM}\left(\mathsf{unfold}\left(\mathcal{A}^\top\right), r_1\right)$
\STATE $\mathbf{J}=\mathsf{SNMF}$-$\mathsf{FGM}\left(\mathsf{unfold}\left(\mathcal{A}\left(\mathbf{I},:,:\right)\right), r_2\right)$
\FOR {$i=1:maxiter$}
\STATE $\mathcal{A}^{old}_\mathbf{I}=\mathcal{A}\left(\mathbf{I},:,:\right)$,\quad $\mathcal{A}^{old}_\mathbf{J}=\mathcal{A}\left(:,\mathbf{J},:\right)$
\STATE $\mathbf{I}=\mathsf{SNMF}$-$\mathsf{FGM}\left(\mathsf{unfold}\left(\mathcal{A}\left(:,\mathbf{J},:\right)^\top\right), r_1\right)$
\STATE $\mathbf{J}=\mathsf{SNMF}$-$\mathsf{FGM}\left(\mathsf{unfold}\left(\mathcal{A}\left(\mathbf{I},:,:\right)\right), r_2\right)$
\IF {$\left\|\mathcal{A}^{\mathrm{old}}_\mathbf{I}-\mathcal{A}\left(\mathbf{I},:,:\right)\right\|_{\fro}+\left\|\mathcal{A}^{\mathrm{old}}_\mathbf{J}-\mathcal{A}\left(:,\mathbf{J},:\right)\right\|_{\fro}\leqslant\delta$}
\STATE break
\ENDIF
\ENDFOR
\RETURN $\mathbf{I},\mathbf{J}$, the core $\mathcal{A}\left(\mathbf{I},\mathbf{J},:\right)$
\end{algorithmic}
\end{algorithm}

\subsection{t-CUR-DEIM index selection}
The initial model \eqref{e3.6} is to find the nonzero slices to form the core tensor $\mathcal{A}_{\mathbf{I}\mathbf{J}:}$. And the SNMF-FGM algorithm can decide which slices are the important ones. Since the relationship between the t-CUR decomposition and the coseparable NTF has been proved in Theorem \ref{t3.5}, we want to select the important slices with t-CUR sampling. First, we will introduce the tensor stable rank and the t-CUR sampling theorem.
\begin{definition}{\rm (Tensor stable rank)}\label{t4.1}
The stable rank of $\mathcal{A}\in \mathbb{R}^{m\times n\times p}$ is denoted as $\mathsf{st.rank}\left( \mathcal{A} \right)$ and
$$\mathsf{st.rank}\left( \mathcal{A} \right):=\frac{\left\|\mathcal{A}\right\|^2_{\fro}}{\left\|\mathcal{A}\right\|^2_2}. $$
\end{definition}

Let the tensor $\mathcal{A}\in \mathbb{R}^{m\times n\times p}$ have $\mathsf{rank}_t\left( \mathcal{A} \right)=r$. Using \eqref{e7}, we have the following result.
\begin{align}\label{e4.1}
\mathsf{st}.\mathsf{rank}\left( \mathcal{A} \right) :=\frac{\left\| \mathcal{A} \right\| _{\fro}^{2}}{\left\| \mathcal{A} \right\| _{2}^{2}}=\frac{\sum\limits_{k=1}^p{\left\| \widehat{A}_k \right\| _{\fro}^{2}}}{p\left\| \widehat{A} \right\| _{2}^{2}}=\frac{\sum\limits_{k=1}^p{\sum\limits_{i=1}^{r_k}{\sigma _{i}^{2}\left( \widehat{A}_k \right)}}}{p\sigma _{1}^{2}\left( \widehat{A} \right)}\leqslant \frac{\sum\limits_{k=1}^p{\sum\limits_{i=1}^r{\sigma _{1}^{2}\left( \widehat{A} \right)}}}{p\sigma _{1}^{2}\left( \widehat{A} \right)}=r=\mathsf{rank}_t\left( \mathcal{A} \right),
\end{align}
where $\widehat{A}$ satisfies \eqref{e2.4} and $r_k$ is the rank of diagonal block $\widehat{A}_k$.

\begin{theorem}{(t-CUR sampling)}\label{t4.2}
Let tensor $\mathcal{A}\in \mathbb{R}^{m\times n\times p}$ satisfy $\mathsf{st.rank}\left( \mathcal{A} \right)=s$, $\mathsf{rank}_t\left( \mathcal{A} \right)=r$. The index sets $\mathbf{I}$, $\mathbf{J}$ are sampled independently with replacement from $\left\{1:m\right\}$, $\left\{1:n\right\}$ according to two probability distributions $\mathbf{p}_i\geqslant\alpha^2_i\frac{\left\|\mathcal{A}_{i::}\right\|^2_{\fro}}{\left\|\mathcal{A}\right\|^2_{\fro}}$, $\mathbf{q}_j\geqslant\beta^2_j\frac{\left\|\mathcal{A}_{:j:}\right\|^2_{\fro}}{\left\|\mathcal{A}\right\|^2_{\fro}}$ respectively for some constants $\alpha_i,\beta_j>0$, where $\alpha_i=1$ if $\mathcal{A}_{i::}=0$ and $\beta_j=1$ if $\mathcal{A}_{:j:}=0$. Let $\delta \in \left( 0,1 \right) $, $\alpha=\min\limits_{i}\{\alpha_i\}$, $\beta=\min\limits_{j}\{\beta_j\}$, $\gamma=\min\{\alpha, \beta\}$, $\varepsilon \in\left(0,\min\left\{\sqrt{2\gamma}/\sqrt[4]{\delta},{\sigma _{\min}\left( \hat{A} \right)}/{\sigma _1\left( \hat{A} \right)}\right\}\right)$, tensor $\mathcal{R}=\mathcal{A}_{\mathbf{I}::}$, $\mathcal{C}=\mathcal{A}_{:\mathbf{J}:}$, $\mathcal{U}=\mathcal{A}_{\mathbf{I}\mathbf{J}:}$, and
$$\left| \mathbf{I} \right|,\left| \mathbf{J} \right|\geqslant C\frac{s}{\varepsilon ^4\delta}\log \left( \frac{s}{\varepsilon ^4\delta} \right) ,$$
then with probability at least $\left( 1-2\exp\left(-\frac{C_1 \alpha^2}{\delta}\right) \right)\left( 1-2\exp\left(-\frac{C_2\beta^2}{\delta}\right) \right) $, we have
$$\mathsf{rank}_m\left( \mathcal{A} \right)=\mathsf{rank}_m\left( \mathcal{U} \right),\quad\mathcal{A}=\mathcal{C}\ast \mathcal{U}^{\dag}\ast \mathcal{R}.$$
\end{theorem}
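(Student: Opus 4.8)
The plan is to separate the one deterministic statement that actually forces an exact decomposition from the probabilistic work, and concentrate all effort on the latter. By \Cref{t2.10}, as soon as $\mathsf{rank}_m(\mathcal U)=\mathsf{rank}_m(\mathcal A)$ holds, the identity $\mathcal A=\mathcal C\ast\mathcal U^{\dagger}\ast\mathcal R$ follows for free; hence the whole theorem collapses to showing that the sampled core preserves the multirank with the stated probability. First I would pass to the Fourier domain, where $\mathsf{rank}_m(\mathcal U)=\mathsf{rank}_m(\mathcal A)$ reads $\mathsf{rank}\big((\widehat A_k)_{\mathbf I\mathbf J}\big)=\mathsf{rank}(\widehat A_k)=r_k$ for every $k\in\{1:p\}$. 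The crucial structural point is that selecting the horizontal set $\mathbf I$ (resp.\ lateral set $\mathbf J$) of $\mathcal A$ selects the \emph{same} rows $\mathbf I$ (resp.\ columns $\mathbf J$) inside every frequency slice $\widehat A_k$, because the FFT mixes only the third mode; thus $\widehat{\mathcal U}_{::k}=(\widehat A_k)_{\mathbf I\mathbf J}$.

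Next I would reduce rank preservation to row/column space capture. If the rows indexed by $\mathbf I$ span the row space of $\widehat A_k$ and the columns indexed by $\mathbf J$ span its column space, then $\mathsf{rank}\big((\widehat A_k)_{\mathbf I\mathbf J}\big)=r_k$; so it suffices to prove simultaneous row-space capture over all $k$ and simultaneous column-space capture over all $k$. I would then translate the sampling weights: by Parseval along the third mode (\Cref{t2.7} and \eqref{e4.1}) the time-domain row fractions $\|\mathcal A_{i::}\|_{\fro}^2/\|\mathcal A\|_{\fro}^2$ equal the aggregate frequency energy fractions $\sum_k\|(\widehat A_k)_{i:}\|_2^2/\sum_k\|\widehat A_k\|_{\fro}^2$, and likewise for columns. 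Consequently the hypotheses $\mathbf p_i\ge\alpha_i^2\,\|\mathcal A_{i::}\|_{\fro}^2/\|\mathcal A\|_{\fro}^2$ and $\mathbf q_j\ge\beta_j^2\,\|\mathcal A_{:j:}\|_{\fro}^2/\|\mathcal A\|_{\fro}^2$ say precisely that the common row (resp.\ column) law dominates, up to the slack $\alpha_i^2$ (resp.\ $\beta_j^2$), the aggregate energy law that controls simultaneous capture; the minima $\alpha,\beta$ and $\gamma=\min\{\alpha,\beta\}$ quantify the worst-case distortion from the ideal distribution, and the ceiling $\varepsilon<\sqrt{2\gamma}/\sqrt[4]{\delta}$ keeps the resulting sample-size expression and concentration bound well posed.

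With the dictionary in place I would invoke the matrix CUR sampling theorem of \cite{CURsample} (adapted to complex matrices, since the $\widehat A_k$ are complex) on the aggregated, unfolded operator, feeding it the sample size $|\mathbf I|,|\mathbf J|\gtrsim \tfrac{s}{\varepsilon^4\delta}\log\big(\tfrac{s}{\varepsilon^4\delta}\big)$ with $s=\mathsf{st.rank}(\mathcal A)$, and use $\varepsilon<\sigma_{\min}(\widehat A)/\sigma_1(\widehat A)$ to guarantee the perturbation is too small to annihilate any nonzero singular value, so that capture is \emph{exact} rather than merely approximate; sampling with replacement is absorbed into that theorem's concentration step. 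Because the row and column index sets are drawn independently, the two capture events then combine multiplicatively, producing the stated $\big(1-2e^{-C_1\alpha^2/\delta}\big)\big(1-2e^{-C_2\beta^2/\delta}\big)$. I expect the main obstacle to be exactly the coupling: a single index set $\mathbf I$ must capture the row spaces of all $p$ frequency slices at once, yet the sample budget and the failure probability must be governed by the single scalar $s$ with \emph{no} factor of $p$. The way through is to observe that the aggregate energy distribution simultaneously upper-bounds every slice's sampling requirement and that $\|\mathcal A\|_2=\sigma_1(\widehat A)=\max_k\sigma_1(\widehat A_k)$, so that one concentration inequality on the unfolded operator dispatches all slices together; making that single inequality yield the clean exponent in $\alpha,\beta,\delta$ while honoring the distortion constants is the technical heart of the argument.
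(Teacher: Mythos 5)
Your overall architecture matches the paper's: reduce everything to $\mathsf{rank}_m(\mathcal{U})=\mathsf{rank}_m(\mathcal{A})$ so that \Cref{t2.10} delivers $\mathcal{A}=\mathcal{C}\ast\mathcal{U}^{\dag}\ast\mathcal{R}$ for free, work in the Fourier domain where the index sets act identically on every block $\widehat{A}_k$, prove rank preservation separately for $\mathcal{R}$ and $\mathcal{C}$, multiply the two success probabilities by independence, and pass from row/column rank preservation to the core via the equivalence $\mathsf{rank}(\widehat{C}_k)=\mathsf{rank}(\widehat{R}_k)=\mathsf{rank}(\widehat{A}_k)\Leftrightarrow\mathsf{rank}(\widehat{U}_k)=\mathsf{rank}(\widehat{A}_k)$ (the paper cites \cite[Theorem 5.5]{CUR} for this). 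The condition $\varepsilon<\sigma_{\min}(\widehat{A})/\sigma_1(\widehat{A})$ plays exactly the role you assign it.

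However, the step you yourself flag as ``the technical heart'' --- forcing all $p$ frequency slices to be captured simultaneously with no factor of $p$ in the sample budget or the failure probability --- is left unresolved, and your proposed route through it is not quite right. You cannot invoke the matrix CUR sampling theorem of \cite{CURsample} directly on the unfolded operator $\widehat{A}$: sampling a horizontal slice $i$ of $\mathcal{A}$ selects the $i$-th row of \emph{every} diagonal block of $\widehat{A}$ at once, so the rows of $\widehat{A}$ are not sampled independently and the matrix theorem's sampling model does not apply. The paper instead cites the tensor-level concentration bound \cite[Lemma 3.2]{TCURproof}, namely $\lVert\mathcal{A}^{\top}\ast\mathcal{A}-\mathcal{R}^{\top}\ast\mathcal{R}\rVert_2\leqslant\tfrac{\varepsilon^2}{2}\lVert\mathcal{A}\rVert_2^2$ with probability $1-2\exp(-C_1\alpha^2/\delta)$, which is built for exactly this coupled sampling. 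Moreover, the mechanism that then ``dispatches all slices together'' is not per-slice row-space capture but a short pigeonhole argument your proposal is missing: the bound together with $\varepsilon<\sigma_{\min}(\widehat{A})/\sigma_1(\widehat{A})$ and Weyl's inequality gives $\mathsf{rank}(\widehat{R})\geqslant\mathsf{rank}(\widehat{A})$ for the whole block-diagonal matrix; row selection gives $\mathsf{rank}(\widehat{R}_k)\leqslant\mathsf{rank}(\widehat{A}_k)$ for every $k$; and since $\mathsf{rank}(\widehat{R})=\sum_k\mathsf{rank}(\widehat{R}_k)$ and $\mathsf{rank}(\widehat{A})=\sum_k\mathsf{rank}(\widehat{A}_k)$, equality must hold block by block. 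Without this (or an equivalent device), your argument does not close.
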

\begin{proof}
Under the assumptions above, the following estimation holds with probability at least $1-2\exp\left(-\frac{C_1\alpha^2}{\delta}\right)$, which has been proved in \cite[Lemma 3.2]{TCURproof}:
\begin{align}\label{e4.2}
\left\| \mathcal{A} ^{\top} \ast \mathcal{A}-\mathcal{R} ^{\top} \ast \mathcal{R} \right\| _2\leqslant \frac{\varepsilon ^2}{2}\left\| \mathcal{A} \right\| _{2}^{2}.
\end{align}
Using \eqref{e2.5} and \eqref{e4.2}, and noticing that the matrix spectral norm is unitarily invariant, we have
$$\left\| \hat{A}^{\top}\hat{A}-\hat{R}^{\top}\hat{R} \right\| _2=\left\| \mathcal{A} ^{\top}\ast \mathcal{A} -\mathcal{R} ^{\top}\ast \mathcal{R} \right\| _2\leqslant \frac{\varepsilon ^2}{2}\left\| \mathcal{A} \right\| _{2}^{2}=\frac{\varepsilon ^2}{2}\left\| \widehat{A} \right\| _{2}^{2}<\frac{1}{2}\sigma _{\min}^{2}\left( \widehat{A} \right) <\sigma _{\min}^{2}\left( \widehat{A} \right) .$$
Therefore, $\mathsf{rank}\left( \widehat{R} \right) \geqslant \mathsf{rank}\left( \widehat{A} \right) $. Since the horizontal slices of $\mathcal{R}$ are picked from $\mathcal{A}$, the corresponding rows of $\widehat{R}$ are picked from $\widehat{A}$ as well, then $\mathsf{rank}\left( \widehat{R} \right) \leqslant \mathsf{rank}\left( \widehat{A} \right) $, which yields $\mathsf{rank}\left( \widehat{R} \right) =\mathsf{rank}\left( \widehat{A} \right)$.

Besides, each diagonal block satisfies $\mathsf{rank}\left( \widehat{R}_k \right) \leqslant\mathsf{rank}\left( \widehat{A}_k \right)$ for  $k\in\{1:p\}$. Notice that 
$$\mathsf{rank}\left( \widehat{R} \right) =\sum_{k=1}^p{\mathsf{rank}\left( \widehat{R}_k \right)}\leqslant \sum_{k=1}^p{\mathsf{rank}\left( \widehat{A}_k \right)}=\mathsf{rank}\left( \widehat{A} \right) ,$$
hence each $\mathsf{rank}\left( \widehat{R}_k \right) = \mathsf{rank}\left( \widehat{A}_k \right)$, i.e., $\mathsf{rank}_m\left( \mathcal{R} \right) = \mathsf{rank}_m\left( \mathcal{A} \right)$.

Similarly, we can prove that $\mathsf{rank}_m\left( \mathcal{C} \right) = \mathsf{rank}_m\left( \mathcal{A} \right)$ with probability at least $1-2\exp\left(-\frac{C_2\beta^2}{\delta}\right)$. So $\mathsf{rank}_m\left( \mathcal{C} \right) = \mathsf{rank}_m\left( \mathcal{R} \right)=\mathsf{rank}_m\left( \mathcal{A} \right)$ with probability at least 
$$\left( 1-2\exp\left(-\frac{C_1 \alpha^2}{\delta}\right) \right)\left( 1-2\exp\left(-\frac{C_2\beta^2}{\delta}\right) \right) .$$ According to \Cref{t2.9}, it indicates that $\mathsf{rank}\left( \widehat{C}_k \right)=\mathsf{rank}\left( \widehat{R}_k \right) =\mathsf{rank}\left( \widehat{A}_k \right)$, which is equivalent to $\mathsf{rank}\left( \widehat{U}_k \right)=\mathsf{rank}\left( \widehat{A}_k \right)$ \cite[Theorem 5.5]{CUR}, $k\in\left\{1:p\right\}$, i.e., 
$$\mathsf{rank}_m\left( \mathcal{A} \right)=\mathsf{rank}_m\left( \mathcal{U} \right),\quad\mathcal{A}=\mathcal{C}\ast \mathcal{U}^{\dag}\ast \mathcal{R}.$$
\end{proof}

We can adjust each $\alpha_i$ and $\beta_j$ to reach different sampling distributions. If we let 
$\alpha _i=\frac{1}{\sqrt{m}}\mathop {\min} \limits_i\left\{ \frac{\lVert \mathcal{A} \rVert _{\fro}}{\lVert \mathcal{A}_{i::} \rVert _{\fro}} \right\}$ and $\beta _j=\frac{1}{\sqrt{n}}\mathop {\min} \limits_j\left\{ \frac{\lVert \mathcal{A} \rVert _{\fro}}{\lVert \mathcal{A}_{:j:} \rVert _{\fro}} \right\}$, we can prove that the uniform distributions
\begin{align}\label{e4.3}
\mathbf{p}_{i}^{\mathrm{unif}}=\frac{1}{m},\quad\mathbf{q}_{j}^{\mathrm{unif}}=\frac{1}{n},
\end{align}
satisfy Theorem \ref{t4.2}. When every $\alpha_i=\beta_j=1$, we obtain the slice-size sampling
\begin{align}\label{e4.4}
\mathbf{p}_{i}^{\mathrm{hor}}=\frac{\left\| \mathcal{A} _{i::} \right\| _{\fro}^{2}}{\left\| \mathcal{A} \right\| _{\fro}^{2}},\quad\mathbf{q}_{j}^{\mathrm{lat}}=\frac{\left\| \mathcal{A} _{:j:} \right\| _{\fro}^{2}}{\left\| \mathcal{A} \right\| _{\fro}^{2}}.
\end{align}
Let tensor $\mathcal{A}\in\mathbb{R}^{m\times n\times p}$ have $\mathsf{rank}_t\left(\mathcal{A}\right)=r$ and $\mathsf{st.rank}\left(\mathcal{A}\right)=s$. Suppose that $\mathcal{A}$ satisfies the truncated t-SVD
$$\mathcal{A}=\mathcal{W} _{:,1:r,:}\ast \varSigma _{1:r,1:r,:}\ast \mathcal{V} _{:,1:r,:}^{\top}.$$
Then 
\begin{align*}
\left\| \mathcal{A} _{i::} \right\| _{\fro}^{2}&=\left\| \left( \mathcal{W} _{:,1:r,:}\ast \varSigma _{1:r,1:r,:}\ast \mathcal{V} _{:,1:r,:}^{\top} \right) _{i::} \right\| _{\fro}^{2}
\\
&=\left\| \mathcal{W} _{i,1:r,:}\ast \varSigma _{1:r,1:r,:}\ast \mathcal{V} _{:,1:r,:}^{\top} \right\| _{\fro}^{2}=\left\| \mathcal{W} _{i,1:r,:}\ast \varSigma _{1:r,1:r,:} \right\| _{\fro}^{2},
\end{align*}
where the last equation is the orthogonal invariant property of the Frobenius norm. By transforming the equation into block diagonal form in the Fourier domain and using Remark \ref{t2.7}, we then have
\begin{align*}
\left\| \mathcal{A} _{i::} \right\| _{\fro}^{2}&=\frac{1}{p}\sum_{k=1}^p{\left\| \left( \widehat{W}_k\widehat{\Sigma }_k \right) _{i,1:r} \right\| _{\fro}^{2}}=\frac{1}{p}\sum_{k=1}^p{\sum_{j=1}^r{\left| \widehat{W}_k \right|_{ij}^{2}\left| \widehat{\Sigma }_k \right|_{jj}^{2}}}
\\
&\leqslant \frac{\sigma _{1}^{2}\left(\widehat{A}\right)}{p}\sum_{k=1}^p{\sum_{j=1}^r{\left| \widehat{W}_k \right|_{ij}^{2}}}=\frac{\sigma _{1}^{2}\left(\widehat{A}\right)}{p}\sum_{k=1}^p{\left\| \left( \widehat{W}_k \right) _{i,1:r} \right\| _{\fro}^{2}}=\left\|\mathcal{A}\right\|^2_2\left\| \mathcal{W} _{i,1:r,:} \right\| _{\fro}^{2}.
\end{align*}
Therefore, $\frac{1}{r}\left\| \mathcal{W} _{i,1:r,:} \right\| _{\fro}^{2}\geqslant \frac{1}{r}\frac{\left\| \mathcal{A} \right\| _{\fro}^{2}\left\| \mathcal{A} _{i::} \right\| _{\fro}^{2}}{\left\| \mathcal{A} \right\| _{\fro}^{2}\left\| \mathcal{A} \right\| _{2}^{2}}=\frac{s}{r}\frac{\left\| \mathcal{A} _{i::} \right\| _{\fro}^{2}}{\left\| \mathcal{A} \right\| _{\fro}^{2}}$ and $\frac{1}{r}\left\| \mathcal{V} _{j,1:r,:} \right\| _{\fro}^{2}\geqslant \frac{s}{r}\frac{\left\| \mathcal{A} _{:j:} \right\| _{\fro}^{2}}{\left\| \mathcal{A} \right\| _{\fro}^{2}}$.
Here we have leverage score sampling
\begin{align}\label{e4.5}
\mathbf{p}_{i}^{\mathrm{lvg},r}=\frac{1}{r}\left\| \mathcal{W} _{i,1:r,:} \right\| _{\fro}^{2},\quad\mathbf{q}_{j}^{\mathrm{lvg},r}=\frac{1}{r}\left\| \mathcal{V} _{j,1:r,:} \right\| _{\fro}^{2}.
\end{align}

With the above results, we have the t-CUR decomposition algorithms \Cref{a3}.
\begin{algorithm}
\caption{t-CUR decomposition}
\begin{algorithmic}\label{a3}
\STATE $\mathbf{Input}$: \ $\mathcal{A}\in \mathbb{R}^{m\times n\times p}$, sampling sizes $d_1,d_2$, sampling distributions $\mathbf{p}, \mathbf{q}$
\STATE Sample $d_1$ horizontal indices $\mathbf{I}(i)$ according to $\mathbf{p}$
\STATE Sample $d_2$ lateral indices $\mathbf{J}(j)$ according to $\mathbf{q}$
\STATE $\mathcal{C}=\mathcal{A}\left(:,\mathbf{J},:\right)$ 
\STATE $\mathcal{U}=\mathcal{A}\left(\mathbf{I},\mathbf{J},:\right)$ 
\STATE $\mathcal{R}=\mathcal{A}\left(\mathbf{I},:,:\right)$ 
\RETURN $\mathcal{C},\mathcal{U},\mathcal{R},\mathbf{I},\mathbf{J}$
\end{algorithmic}
\end{algorithm}

Theorem \ref{t4.2} suggests that we can over-sample indices to find the exact t-CUR decomposition of $\mathcal{A}$. To ensure the exact t-CUR decomposition and avoid missing important features, we use Algorithm \ref{a3} to sample $d_1=r_1\log(m)$ horizontal slices and $d_2=r_2\log(n)$ lateral slices.
 Then we enforce  the sizes of the sampled index sets to be  $|\mathbf{I}|=r_1$ and $|\mathbf{J}|=r_2$. The t-DEIM method can be useful for choosing such representative indices. This method,  an extension of the matrix DEIM method in \cite{TDEIM}, is presented in Algorithm \ref{a4}.
\begin{algorithm}
\caption{T-DEIM index selection}
\label{a4}
\begin{algorithmic}
\STATE $\mathbf{Input}:  \mathcal{U}\in \mathbb{R}^{m\times n\times p}$
\STATE $\mathbf{p}\left(1\right)=\mathop {\mathrm{arg}\max} \limits_{i\in \left\{ 1:m \right\}}\left\| \mathcal{U} \left(i,1,:\right) \right\| _{\fro}$
\FOR{$j=2:n$}
\STATE $\mathcal{R} =\mathcal{U}\left(:,j,:\right)-\mathcal{U}\left(:,1:j-1,:\right)\ast \mathcal{U}\left(\mathbf{p}\left(1:j-1\right),1:j-1,:\right)^{-1}\ast \mathcal{U} \left(\mathbf{p}\left(1:j-1\right),j,:\right)$
\STATE $\mathbf{p}\left(j\right)=\mathop {\mathrm{arg}\max} \limits_{i\in \left\{ 1:m \right\}}\left\| \mathcal{R} \left(i,j,:\right) \right\| _{\fro}$
\ENDFOR
\RETURN index set $\mathbf{p}$
\end{algorithmic}
\end{algorithm}

Combining Algorithms \ref{a3} and \ref{a4}, we provide the t-CUR-DEIM index selection method to construct the core tensor $\mathcal{A}_{\mathbf{IJ}:}$ in Algorithm \ref{a5}.

\begin{algorithm}
\caption{t-CUR-DEIM index selection}
\label{a5}
\begin{algorithmic}
\STATE $\mathbf{Input}$:  $\mathcal{A}\in \mathbb{R}^{m\times n\times p}$, extracting sizes $r_1,r_2$
\STATE $\left[ \sim,\mathcal{U} ,\sim,\mathbf{I},\mathbf{J} \right] =\textnormal{t-CUR}\left( \mathcal{A} ,r_1\log \left( m \right) ,r_2\log \left( n \right) \right) $
\STATE $\left[ \mathcal{W} ,\sim ,\mathcal{V} \right] =\textnormal{t-SVD}\left( \mathcal{U} \right) $
\STATE $\mathbf{p}=\textnormal{t-DEIM}\left( \mathcal{V} \right) $
\STATE $\mathbf{q}=\textnormal{t-DEIM}\left( \mathcal{W} \right) $
\STATE $\mathbf{I}=\mathbf{I}\left(\mathbf{p}\left( 1:r_1 \right)\right)$
\STATE $\mathbf{J}=\mathbf{J}\left(\mathbf{q}\left( 1:r_2 \right)\right)$
\RETURN $\mathbf{I},\mathbf{J}$, the core $\mathcal{A}\left(\mathbf{I},\mathbf{J},:\right)$
\end{algorithmic}
\end{algorithm}

After   the core tensor $\mathcal{A}_{\mathbf{I}\mathbf{J}:}$ is obtained,  the tensors $\mathcal{P}_1$ and $\mathcal{P}_2$ is solved by considering the following optimization problem:
\begin{align}\label{e4.6}
\left( \mathcal{P} _1,\mathcal{P} _2 \right) =\mathop {\mathrm{arg}\min} \limits_{\mathcal{Q} _1\in \mathbb{R} _{+}^{m\times r_1\times p}\atop	\mathcal{Q} _2\in \mathbb{R} _{+}^{r_2\times n\times p}}\left\| \mathcal{A} -\mathcal{Q} _1\ast\mathcal{A} _{\mathbf{IJ}:}\ast\mathcal{Q} _2 \right\| _{\fro}^{2}.\end{align}
When one of the factors, $\mathcal{Q}_1$ or $\mathcal{Q}_2$ is fixed, \eqref{e4.6} will be reduced to a tensor nonnegative least squares problem. And using Definition \ref{t2.2}, the Frobenius norm
$$\left\| \mathcal{A} -\mathcal{G}\ast \mathcal{H} \right\| _{\fro}$$ 
can be re-written in matrix form as
$$\left\| \mathsf{unfold}\left( \mathcal{A} \right) -\mathsf{bcirc}\left( \mathcal{G} \right) \mathsf{unfold}\left( \mathcal{H} \right) \right\| _{\fro},$$
allowing us to apply a coordinate descent method similar to that described in \cite{P1P2}, as shown in \Cref{a6}.

\begin{algorithm}
\caption{Computing $\mathcal{P}_1$ and $\mathcal{P}_2$}
\label{a6}
\begin{algorithmic}
\STATE $\mathbf{input}$\ $\mathcal{A}\in \mathbb{R}_+^{m\times n\times p}$, index sets $\mathbf{I}$, $\mathbf{J}$, the maximum number of iterations $maxiter$, stopping criterion $\delta$
\STATE solve $P_1=\mathop {\mathrm{arg}\min} \limits_{Q_1\in \mathbb{R} _{+}^{mp\times r_1p}}\left\| \mathsf{unfold}\left( \mathcal{A} \right) -Q_1\mathsf{unfold}\left( \mathcal{A} _{\mathbf{I}::} \right) \right\| _{\fro}^{2}$
\STATE $\mathcal{P}_1=\mathsf{fold}\left(P_1\mathsf{unfold}\left(\mathcal{I}\right)\right)$
\STATE solve $P_2=\mathop {\mathrm{arg}\min} \limits_{Q_2\in \mathbb{R} _{+}^{r_2p\times n}}\left\| \mathsf{unfold}\left( \mathcal{A} \right) -\mathsf{bcirc}\left( \mathcal{A} _{:\mathbf{J}:} \right) Q_2 \right\| _{\fro}^{2}$
\STATE $\mathcal{P}_2=\mathsf{fold}\left(P_2\right)$
\FOR {$i=1:maxiter$}
\STATE $\mathcal{P}^{\mathrm{old}}_1 = \mathcal{P}_1,\quad\mathcal{P}^{\mathrm{old}}_2 = \mathcal{P}_2$
\STATE solve $P_2=\mathop {\mathrm{arg}\min} \limits_{Q_2\in \mathbb{R} _{+}^{r_2p\times n}}\left\| \mathsf{unfold}\left( \mathcal{A} \right) -\mathsf{bcirc}\left( \mathcal{P}_1\ast\mathcal{A} _{\mathbf{IJ}:} \right) Q_2 \right\| _{\fro}^{2}$
\STATE $\mathcal{P}_2=\mathsf{fold}\left(P_2\right)$
\STATE solve $P_1=\mathop {\mathrm{arg}\min} \limits_{Q_1\in \mathbb{R} _{+}^{mp\times r_1p}}\left\| \mathsf{unfold}\left( \mathcal{A} \right) -Q_1\mathsf{unfold}\left( \mathcal{A} _{\mathbf{IJ}:}\ast\mathcal{P}_2 \right) \right\| _{\fro}^{2}$
\STATE $\mathcal{P}_1=\mathsf{fold}\left(P_1\mathsf{unfold}\left(\mathcal{I}\right)\right)$
\IF {$\left\|\mathcal{P}^{\mathrm{old}}_1-\mathcal{P}_1\right\|_{\fro}+\left\|\mathcal{P}^{\mathrm{old}}_2-\mathcal{P}_2\right\|_{\fro}\leqslant\delta$}
\STATE break
\ENDIF
\ENDFOR
\RETURN $\mathcal{P}_1$, $\mathcal{P}_2$
\end{algorithmic}
\end{algorithm}

\section{Numerical experiments}\label{sec:simulation}
In this section, we present the performance evaluations of our proposed index selection algorithms, CoS-NTF and t-CUR-DEIM (referred to as t-CUR), on both synthetic and real-world datasets. Specifically, we have selected some facial databases as real-world datasets, which can be properly arranged in the tensor format.

For the CoS-NTF index selection, we employ two different stopping criteria for the synthetic datasets and the facial datasets, respectively. For the synthetic dataset, we set the threshold $\delta= 10^{-6}$, while for the facial datasets, we set $\delta=10^{-2}$. For all simulations, the maximum iterative number is bounded by $50$.  
In the SNMF-FGM step of \cref{a2}, the output $\mathbf{I,J}$ are set to be the indices of the $r_1$ largest diagonal entries of $X$ and the $r_2$ largest diagonal entries of $Y$ respectively. The regularization parameters in \eqref{e3.10}-\eqref{e3.11} are fixed at 0.25 after experimenting with values ranging from $10^{-3}$ to $1$.

In the t-CUR index selection process, we evaluate the performance of three different random sampling distributions \eqref{e4.3}-\eqref{e4.5} for synthetic datasets. It's important to note that while leverage score sampling involves the computation of t-SVD, and slice-size sampling requires calculating the tensor Frobenius norm, generating both random distributions is relatively slower compared to uniform sampling. Considering these computational demands, uniform sampling is chosen as the preferred method for facial datasets to prioritize computational efficiency.

All the tests are executed from MATLAB R2022a on a laptop. The system specifications include an AMD Ryzen 7 4800H processor, featuring a 2.9 GHz clock speed and 8 cores, coupled with 16 GB of DDR4 RAM operating at 3200MHz. For graphical computations and processing, we utilize an NVIDIA GeForce GTX 1650 Ti GPU. This hardware runs on a 64-bit operating system, which is based on an x64 processor architecture. 

\subsection{Synthetic data sets}
With \eqref{e3.3}, we generate the noisy co-($10,3$)-separable tensor
$$\mathcal{A}=\varPi _1\ast\max \left\{ 0,\underset{\mathcal{A}^{\mathrm{scale}}}{\underbrace{\mathcal{D}_r\ast\left( \begin{matrix}
	\mathcal{S}&		\mathcal{S}\ast\mathcal{H}\\
	\mathcal{M}\ast\mathcal{S}&		\mathcal{M}\ast\mathcal{S}\ast\mathcal{H}\\
\end{matrix} \right)\ast \mathcal{D}_c}}+\mathcal{N} \right\} \ast\varPi _2\in\mathbb{R}_+^{100\times100\times10},$$
where entries of tensors $\mathcal{S}\in\mathbb{R}_+^{10\times3\times10}$, $\mathcal{M}\in\mathbb{R}_+^{90\times10\times10}$, and $\mathcal{H}\in\mathbb{R}_+^{3\times97\times10}$ are independently sampled from a  uniform distribution between  $[0,1]$. The scaling tensors $\mathcal{D}_r$ and $\mathcal{D}_c$ adjust $\mathcal{A}^{\mathrm{scale}}$ such that  the sum of each slice $\mathcal{A}^{\mathrm{scale}}_{i::}$ or $\mathcal{A}^{\mathrm{scale}}_{:j:}$ sums to  $100$. Then $\mathcal{A}^{\mathrm{scale}}$ is the noiseless scaled co-($10,3$)-separable tensor. 

The noise tensor $\mathcal{N}\in\mathbb{R}^{100\times100\times10}$ is generated from  a standard normal distribution. The noise magnitude is normalized so that $\|\mathcal{N}\|_{\fro}=\epsilon\|\mathcal{A}^{\mathrm{scale}}\|_{\fro}$ with  noise levels $\epsilon$ varying from $10^{-7}$ to $10^{-1}$. $\varPi_1$ and $\varPi_2$ are permutation tensors with $\left(\varPi _1\right)_{::1}$ and $\left(\varPi _2\right)_{::1}$ generated randomly and with the remaining entries being  zeros. 

For each noise level, we generated 10 different tensor  $\mathcal{A}$  and evaluated their average relative errors
$$\left\| \mathcal{A} -\widetilde{\mathcal{A}} \right\| _{\fro}/\left\| \mathcal{A} \right\| _{\fro},\quad \widetilde{\mathcal{A}}=\widetilde{\mathcal{P}}_1\ast \mathcal{A} _{\mathbf{\widetilde{I}\widetilde{J}}:}\ast \widetilde{\mathcal{P}}_2$$
where $\widetilde{\mathbf{I}}$, $\widetilde{\mathbf{J}}$ are selected by  \Cref{a2,a5}. and $\widetilde{\mathcal{P}}_1$, $\widetilde{\mathcal{P}}_2$ are computed by \Cref{a6} with $\textnormal{maxiter}=100$ and $\delta=10^{-6}$. The results are shown in \Cref{f3}.

\begin{figure}[htbp]
\begin{center}
\includegraphics[width=15cm]{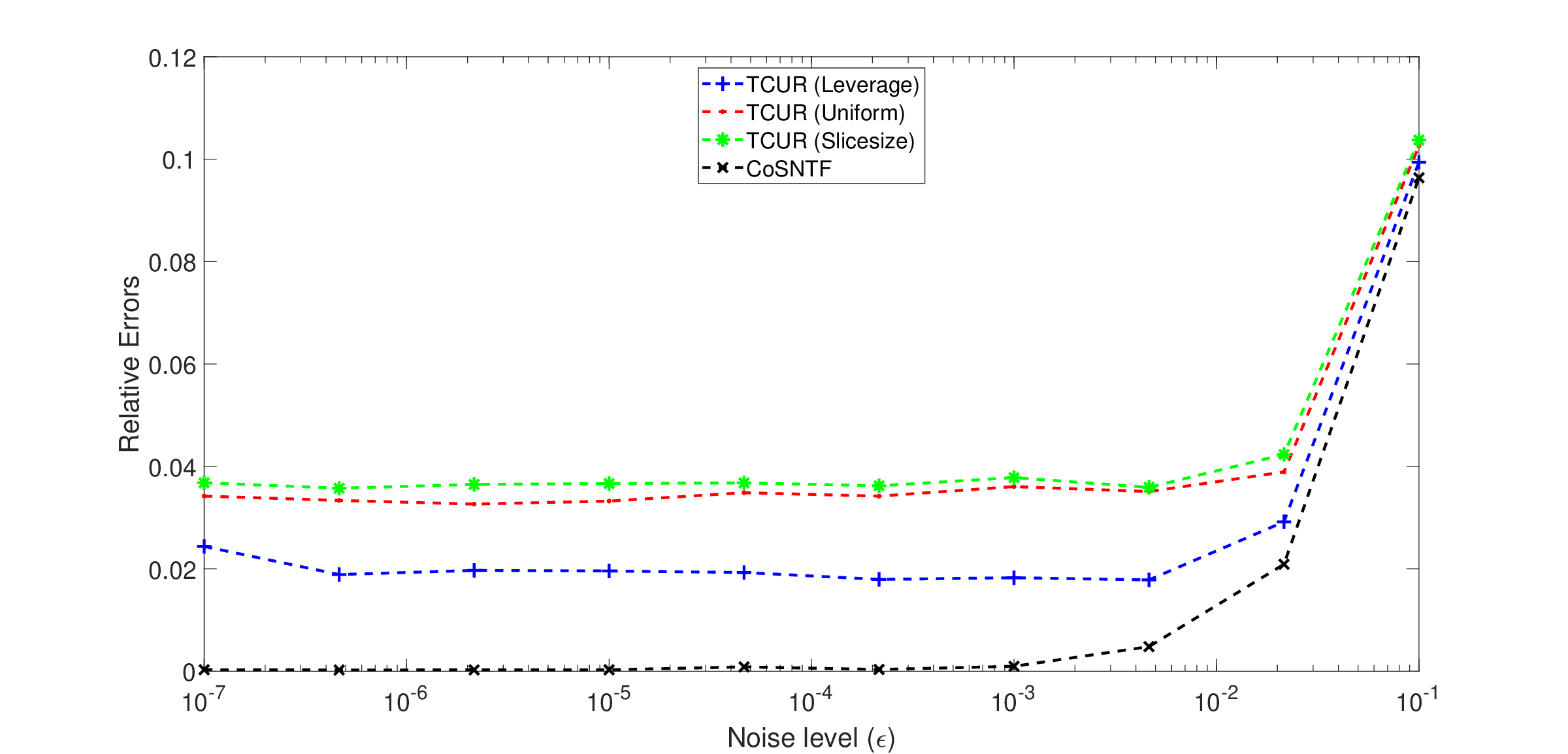}
\caption{Average relative error on the fully randomly generated coseparable tensors.}
\label{f3}
\end{center}
\end{figure}

From  \Cref{f3}, one can see our CoS-NTF index sampling achieves low relative errors, which proves its feasibility and shows its ability to accurately select indices for generating the coseparable core tensor. Meanwhile, the t-CUR-DEIM index selection methods show larger relative errors across all three sampling distributions in repeated experiments. Since \cref{a3} randomly choose $r_1\log m$ and $r_2 \log n$ indices and we use \cref{a4} to further choose $r_1$ and $r_2$ representative indices based on it, the resulting core of \cref{a5} sometimes may not meet the assumption of Theorem \ref{t3.5} (ii), then can't ensure an exact coseparability. Therefore, given a strictly coseparable tensor, the t-CUR method falls short in accurately reconstructing the coseparable tensors compared to CoS-NTF index sampling. However, it still can choose significant indices in most cases, as shown by numerical experiment results. 

Then we delve into a detailed analysis of the t-CUR index sampling. Since the tensor $\mathcal{A}$ is generated from the uniform distribution, each slice will have nearly the same sampling probability for the slice-size sampling, resulting in a similar result with the uniform sampling. We can also see that the leverage sampling performs slightly better because it gives more sampling weights to rows or columns with more information than the other two sampling distributions. 

\subsection{Facial data sets}
In this section, we apply our proposed CoS-NTF and t-CUR index selection algorithms on facial data sets to test their clustering abilities. The data sets we used are briefly described as follows.
\begin{itemize}
\item \textbf{The Extended Yale Face Database B (YaleB)} \cite{YaleB} contains 2414 grayscale images from 38 distinct individuals captured under 64 different lighting conditions. Each image has an original size of $192\times168$ pixels, which we subsequently resize to $24\times21$ pixels. We also selected 20 face images for each individual at random to obtain 760 images. Since the same individuals can be regarded as the same cluster and $r_2$ represents the clustering number, we set $r_2=38$. $r_1=24$ is set to be the row number of the resized images. 
\item \textbf{The ORL Database of Faces (ORL)} \cite{ORL} contains 400 grayscale images from 40 distinct subjects, where each subject has ten different images taken at different lighting, facial expressions, and facial details. The size of each image is $112\times92$ pixels and we resize them to $23\times19$ pixels. We set $r_1=23$, $r_2=40$.
\item \textbf{The Frey Face Dataset (Frey)} is collected by Brendan Frey, which contains 1965 grayscale images of Brendan Frey's face, taken from sequential frames of a small video. The size of each image is $20\times28$, so we set $r_1=r_2=20$.
\item \textbf{The MIT-CBCL Face Database \#1 (CBCL)} \cite{CBCL} consists of a training set of 2429 grayscale face images and a test set of 472 face images of size $19\times19$ pixels. We take the training data set and set $r_1=19$ and $r_2=49$ as in \cite{CBCL2}.
\item \textbf{The UMIST Face Database (UMist)} \cite{UMist} consists of 575 grayscale face images of 20 different people. Each image has $112\times92$ size and we resize them into $28\times23$ pixels. We set $r_1=r_2=20$.
\end{itemize}

For each dataset, we arrange all the images into a tensor $\mathcal{A}\in\mathbb{R}_+^{m\times n\times p}$, where $m$ and $p$ stand for the size of the image and $n$ is the number of images. For example, $\mathcal{A}^{\mathrm{YaleB}}\in\mathbb{R}_+^{24\times 760\times 21}$, $\mathcal{A}^{\mathrm{ORL}}\in\mathbb{R}_+^{23\times 400\times 19}$.

Note that some matrix index selection methods can also be used to cluster the facial data sets, including the Coseparable NMF method, where they vectorize each image and arrange them into a matrix, like ${A}^{\mathrm{YaleB}}\in\mathbb{R}_+^{\left(24\times 21\right)\times 760}$ and ${A}^{\mathrm{ORL}}\in\mathbb{R}_+^{\left(23\times 19\right)\times 400}$ for the data sets above. We also compare our methods with these state-of-the-art index selection methods. We first summarize them as follows.
\begin{itemize}
\item \textbf{The Successive Projections Algorithm (SPA)} \cite{SPA} is a forward method to select the column of the matrix $A$ with the largest $l_2$ norm and then calculate the projections of all columns on the orthogonal complement of the selected column at each step. We will apply SPA on $A$ and $A^\top$ to select $r_2$ columns and $r_1$ rows of $A$ respectively. We call it \textbf{SPA+} method.
\item \textbf{The Coseparable NMF Fast Gradient Method (CoS-NMF)} is an alternating fast gradient method to solve the coseparable NMF problem proposed in \cite{pan2021co} and determine the rows and columns of the coseparable core. We follow the same settings as \cite{pan2021co}, i.e., $\delta=10^{-2}$, $maxiter=1000$ and using the simplest postprocessing to select $r_1$ rows and $r_2$ columns in the SNMF-FGM step. The regularization parameter $\lambda$ is fixed at 0.25 after experimenting with values ranging from $10^{-3}$ to $1$.
\item \textbf{The Matrix CUR sampling (CUR)} is a randomized sampling method to select rows and columns for the CUR decomposition factor, the submatrix $U$. Its sampling stability and complexity have been proved in \cite{CUR}. Similar to our t-CUR index sampling, we will uniformly sample $r_1\log(m)$ rows and $r_2\log(n)$ columns from the $m\times n$ original data matrix, and then use matrix DEIM \cite{DEIM} to enforce the sampled submatrix to be $r_1\times r_2$.
\end{itemize}

As our CoS-NTF method can directly handle the facial data as a whole, it needs additional $\mathsf{unfold}$ operations in each step. Moreover, the unfolded matrix may have a highly imbalanced row-column ratio, resulting in a significant increase in runtime. However, we have proved the relationship between the t-CUR decomposition and coseparable NTF, as well as the t-CUR sampling theory. Motivated by these findings, we adopt a hybrid approach: first pre-selecting indices randomly, such as $r_1\log(m)$ horizontal slices and $r_2\log(n)$ lateral slices, and then applying the CoS-NTF index selection on this pre-selected subtensor. This strategy effectively reduces runtime.

After selecting the row and column indices $\tilde{\mathbf{I}},\tilde{\mathbf{J}}$, we adopt the \cite[Algorithm 3]{pan2021co} to compute their nonnegative factors $\tilde{P}_1,\tilde{P}_2$ and reconstruct the approximate solutions $\tilde{A}=\tilde{P}_1A_{\tilde{\mathbf{I}}\tilde{\mathbf{J}}}\tilde{P}_2$ for the three matrix methods. For our t-CUR and CoS-NTF methods, we compute the tensors $\widetilde{\mathcal{P}}_1$ and $\widetilde{\mathcal{P}}_2$ by the Algorithm \ref{a6} and reach the approximate solutions $\widetilde{\mathcal{A}}=\widetilde{\mathcal{P}}_1\ast \mathcal{A} _{\mathbf{\widetilde{I}\widetilde{J}}:}\ast \widetilde{\mathcal{P}}_2$ as section 5.1. In Table \ref{tb1}, we show the relative approximations (in percent)
$$1-\frac{\left\| {A} -\tilde{A} \right\| _{\fro}}{\left\| {A} \right\| _{\fro}},\quad1-\frac{\left\| \mathcal{A} -\widetilde{\mathcal{A}} \right\| _{\fro}}{\left\| \mathcal{A} \right\| _{\fro}}.$$
\begin{table}[htbp]\centering
\caption{Relative approximations for the facial datasets.}
\label{tb1}
\begin{tabular}{c||c|c|c|c|c|c}
\hline\hline
Database	&$\left(r_1,r_2\right)$	&CoS-NTF	&t-CUR	&SPA+	&CoS-NMF	&CUR\\
\hline
YaleB	&$\left(24,38\right)$	&\rev{77.20}	&\rev{\textbf{78.82}}	&\rev{76.96}	&\rev{73.36}	&\rev{76.13}\\
ORL	&$\left(23,40\right)$	&\rev{84.53}	&\rev{\textbf{85.12}}	&\rev{81.53}	&\rev{80.28}	&\rev{81.90}\\
Frey	&$\left(20,20\right)$	&\rev{88.92}	&\rev{\textbf{89.17}}	&\rev{88.56}	&\rev{86.40}	&\rev{88.51}\\
CBCL	&$\left(19,49\right)$	&\rev{\textbf{82.45}}	&\rev{82.33}	&\rev{80.46}	&\rev{79.55}	&\rev{79.86}\\
UMist	&$\left(20,20\right)$	&\rev{75.22}	&\rev{\textbf{78.15}}	&\rev{75.02}	&\rev{73.21}	&\rev{74.57}\\
\hline
\end{tabular}
\end{table}

Table \ref{tb1} shows that our CoS-NTF and t-CUR methods have higher relative approximation on all 5 databases, which indicates they perform better in clustering than the matrix methods. It also demonstrates that handling the high dimensional data directly and keeping their high-order structure can better retain the original information.

 {Besides, since the real datasets usually have full rank and are only approximately coseparable, we only need to find their approximate coseparable cores. In this case, the t-CUR method performs as well as the CoS-NTF method (see~\cref{a2}), even better.} And since t-CUR sampling is a randomized method and the CoS-NTF method needs to run the fast gradient method iteratively, the t-CUR method runs faster, which shows the potential of our proposed t-CUR sampling theory.

Furthermore, the matrix CUR sampling method also performs well among the three matrix methods. It is an efficient way to sample row/column indices for the matrix.

\section{Conclusion}\label{sec:conclusion}
In this paper, we extend the concept of coseparability to tensors under the t-product framework and propose coseparable NTF. For this factorization, we investigate its characterizations and its relationship with t-CUR decomposition. Using the SNMF-FGM algorithm, we also introduce an algorithm to select the coseparable indices alternately. Additionally, based on t-CUR decomposition, we propose the t-CUR sampling theory and combine it with the t-DEIM method to establish another random algorithm, the t-CUR-DEIM algorithm, for selecting coseparable indices. Then for the selected coseparable cores, we need to solve alternating tensor least squares problems to obtain the nonnegative factors. Therefore, we use the definition of the t-product to unfold the tensor and solve the corresponding matrix least squares problems.

For the proposed t-CUR and CoS-NTF methods, we first test them with synthetic coseparable tensors. The result shows that the CoS-NTF method effectively selects coseparable indices, and the t-CUR method has relatively larger errors due to its randomness, where the leverage score sampling can assign more sampling weights to slices with more information, and outperforms uniform sampling and slice-size sampling. Furthermore, on the real facial data sets, we compare their clustering abilities with three matrix sampling methods. The results suggest that our t-CUR and CoS-NTF methods require more runtime since they should unfold tensors and the unfolded tensors have an unbalanced row-column ratio in each iterative computation. Due to our methods processing each image as a whole rather than vectorizing it, they can retain more original information compared to the three matrix methods, which results in higher relative approximations. It shows the potential of the proposed tensor methods for handling high-dimensional data.

\section*{Acknowledgements}

Some of the work for this article was conducted while J.C. was a visiting student at Michigan State University. The authors would like to express their gratitude to the handling editor, and  two anonymous reviewers for their insightful suggestions regarding our manuscript.

\bibliographystyle{siamplain}
\bibliography{ref}

\end{document}